\documentclass{zgroup/zgroup}

\usepackage[utf8]{inputenc}
\usepackage{cite}
\usepackage{mathrsfs}  
\usepackage{times}
\usepackage{url}
\renewcommand{\cal}{\mathcal}
\newcommand\cA{{\mathcal A}}

\newcommand{\cY}{{\cal Y}}
\newcommand{\cD}{{\cal D}}

\newcommand{\cG}{{\cal G}}
\newcommand\cH{{\mathcal H}}

\newcommand{\cL}{{\cal L}}

\newcommand{\cP}{{\cal P}}

\newcommand{\cU}{{\mathcal U}}

      %
 %
   %

%

%
%

%


 %

\newcommand{\bE}{\mathbb{E}}

\newcommand{\bP}{\mathbb{P}}

\newcommand{\bR}{{\mathbb R}}


 
\renewcommand{\leq}{\leqslant}



\newcommand{\zhun}[1]{\textcolor{violet}{[ZD:~#1]}}

\newtheorem{claim}[theorem]{Claim}

\usepackage{algorithm, algorithmic}

\def\cY{\mathcal Y}
\def\cZ{\mathcal Z}
\def\cA{\mathcal A}

\usepackage{microtype}
\usepackage{cite}
\usepackage{appendix}
\usepackage{caption}
\usepackage{booktabs}       
\usepackage{amsfonts}       
\usepackage{nicefrac}       
\usepackage{microtype}      
\usepackage{multirow, multicol}
\usepackage{ragged2e}
\usepackage{amsmath}
\usepackage{thmtools, thm-restate}
\usepackage{bm, bbm}
\usepackage{booktabs} 
\usepackage{enumitem}
\usepackage{tikz}
\usepackage{wrapfig}
\usetikzlibrary{positioning}
\usepackage{lipsum}
\usepackage{float}

\usepackage{hyperref}
\usepackage{graphicx, color}

\title[Statistical Dispersion Control]{Distribution-Free Statistical Dispersion Control for Societal Applications}

\author{\Name{Zhun Deng}
	\Email{zhun.d@columbia.edu}\\
	\addr Columbia University\\
	\Name{Thomas P. Zollo}
	\Email{tpz2105@columbia.edu}\\
	\addr Columbia University \\
	\Name{Jake C. Snell}
	\Email{js2523@princeton.edu}\\
	\addr Princeton University \\
	\Name{Toniann Pitassi}
	\Email{toni@cs.columbia.edu}\\
	\addr Columbia University\\
	\Name{Richard Zemel}
	\Email{zemel@cs.columbia.edu}\\
	\addr Columbia University}
\date{October 2020}
\begin{document}
	
	\maketitle

%


\begin{abstract}

Explicit finite-sample statistical guarantees on model performance are an important ingredient in responsible machine learning.  Previous work has focused mainly on bounding either the expected loss of a predictor or the probability that an individual prediction will incur a loss value in a specified range.  However, for many high-stakes applications, it is crucial to understand and control the \textit{dispersion} of a loss distribution, or the extent to which different members of a population experience unequal effects of algorithmic decisions. We initiate the study of distribution-free control of statistical dispersion measures with societal implications and propose a simple yet flexible framework that allows us to handle a much richer class of statistical functionals beyond previous work. Our methods are verified through experiments in toxic comment detection, medical imaging, and film recommendation.

\end{abstract}

 \section{Introduction}

 Learning-based predictive algorithms are widely used in real-world systems and have significantly impacted our daily lives.  However, many algorithms are deployed without sufficient testing or a thorough understanding of likely failure modes.  This is especially worrisome in high-stakes application areas such as healthcare, finance, and autonomous transportation.  In order to address this critical challenge and provide tools for rigorous system evaluation prior to deployment, there has been a rise in techniques offering explicit and finite-sample statistical guarantees that hold for any unknown data distribution and black-box algorithm, a paradigm known as distribution-free uncertainty quantification (DFUQ). In \citep{angelopoulos2021learn}, a framework is proposed for selecting a model based on bounds on expected loss produced using validation data.  Subsequent work \citep{snell2022quantile} goes beyond expected loss to provide distribution-free control for a class of risk measures known as quantile-based risk measures (QBRMs)~\citep{dowd_after_2006}. This includes (in addition to expected loss): median, value-at-risk (VaR), and conditional value-at-risk (CVaR) \citep{rockafellar2000optimization}. For example, such a framework can be used to get bounds on the 80th percentile loss or the average loss of the 10\% worst cases.

 While this is important progress towards the sort of robust system verification necessary to ensure the responsible use of machine learning algorithms, in some scenarios measuring the expected loss or value-at-risk is not enough.  As models are increasingly deployed in areas with long-lasting societal consequences, we should also be concerned with the dispersion of error across the population, or the extent to which different members of a population experience unequal effects of decisions made based on a model's prediction.  For example, a system for promoting content on a social platform may offer less appropriate recommendations for the long tail of niche users in service of a small set of users with high and typical engagement, as shown in \citep{lazovich2022measuring}.  This may be undesirable from both a business and societal point of view, and thus it is crucial to rigorously validate such properties in an algorithm prior to deployment and understand how the outcomes \textit{disperse}.  To this end, we offer a novel study providing rigorous distribution-free guarantees for a broad class of functionals including key measures of statistical dispersion in society.  We consider both differences in performance that arise between different demographic groups as well as disparities that can be identified even if one does not have reliable demographic data or chooses not to collect them due to privacy or security concerns. Well-studied risk measures that fit into our framework include the Gini coefficient \citep{yitzhaki1979relative} and other functions of the Lorenz curve as well as differences in group measures such as the median \citep{bhutta2020disparities}.  See Figure~\ref{fig:synthetic} for a further illustration of loss dispersion.
\begin{figure}[!ht]
\centering
    \includegraphics[width=\textwidth]{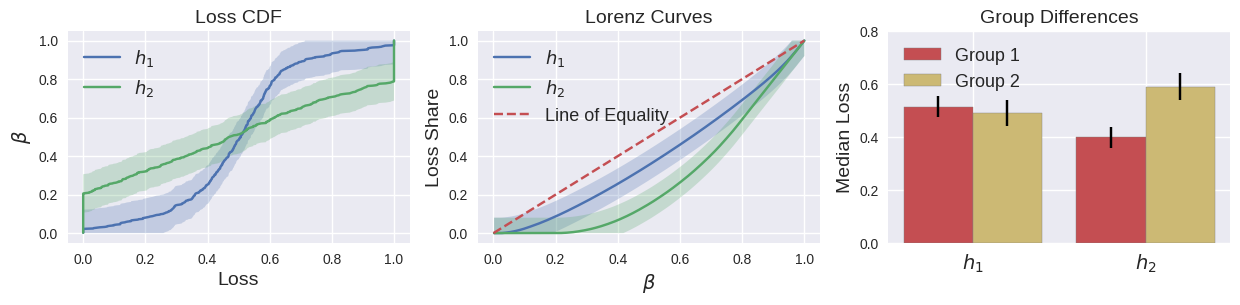}
    \vspace{-0.25in}
    \caption{Example illustrating how two predictors (here $h_1$ and $h_2$) with the same expected loss can induce very different loss dispersion across the population.  \textbf{Left}: The loss CDF produced by each predictor is bounded from below and above. \textbf{Middle}: The Lorenz curve is a popular graphical representation of inequality in some quantity across a population, in our case expressing the cumulative share of the loss experienced by the best-off $\beta$ proportion of the population.  CDF upper and lower bounds can be used to bound the Lorenz curve (and thus Gini coefficient, a function of the shape of the Lorenz curve).  Under $h_2$ the worst-off population members experience most of the loss. \textbf{Right}: Predictors with the same expected loss may induce different median loss for (possibly protected) subgroups in the data, and thus we may wish to bound these differences.}
    \label{fig:synthetic}
\end{figure}

In order to provide rigorous guarantees for socially important measures that go beyond expected loss or other QBRMs, we provide two-sided bounds for quantiles and nonlinear functionals of quantiles. Our framework is simple yet flexible and widely applicable to a rich class of nonlinear functionals of quantiles, including Gini coefficient, Atkinson index, and group-based measures of inequality, among many others.  
Beyond our method for controlling this richer class of functionals, we propose a novel numerical optimization method that significantly tightens the bounds when data is scarce, extending earlier techniques \citep{moscovich2016exact,snell2022quantile}.  We conduct experiments on toxic comment moderation, detecting genetic mutations in cell images, and online content recommendation, to study the impact of our approach to model selection and tailored bounds.

To summarize our contributions, we: (1) initiate the study of distribution-free control of societal dispersion measures; (2) generalize the framework of \citet{snell2022quantile} to provide 
bounds for nonlinear functionals of quantiles; (3) develop a novel 
optimization method that substantially tightens the bounds 
when data is scarce; (4) apply our framework to high-impact NLP, medical, and recommendation applications.

\section{Problem setup}\label{sec:pre}




We consider a black-box model that produces an output $Z$ on every example. Our algorithm selects a predictor $h$, which
maps an input $Z\in\cZ$ to a prediction $h(Z)\in\hat\cY$.
A loss function $\ell: \hat{\mathcal{Y}} \times \mathcal{Y} \rightarrow \mathbb{R}$ quantifies the quality of a prediction $\hat{Y}\in \hat\cY$ with respect to the target output $y\in \cY$. Let $(Z, Y)$ be drawn from an unknown joint distribution $\cP$ over $\mathcal{Z} \times \mathcal{Y}$. We define the random variable $X^h := \ell(h(Z), Y)$ as the loss induced by $h$ on $\mathcal{P}$.
The cumulative distribution function (CDF) of the random variable $X^h$ is $F^h(x) := P(X^h \le x)$.
For brevity, we sometimes use $X$ and $F$ when we do not need to explicitly consider $h$.   
We define the inverse of a CDF (also called inverse CDF) $F$ as  $F^-(p)=\inf\{x:F(x)\ge p\}$ for any $p\in\bR$. 
Finally, we assume access to a set of validation samples $(Z,Y)_{1:n} = \{(Z_1,Y_1), \ldots, (Z_n,Y_n)\}$
for the purpose of achieving distribution-free CDF control with mild assumptions on the loss samples
 $X_{1:n}$. We emphasize that the ``distribution-free'' requirement is on $(Z,Y)_{1:n}$ instead of $X_{1:n}$, because the loss studied on the validation dataset is known to us and we can take advantage of properties of the loss such as boundedness.

\section{Statistical dispersion measures for societal applications}\label{sec:measure}

In this section, we motivate our method by studying some widely-used measures of societal statistical dispersion. There are key gaps between the existing techniques for bounding QBRMs and those needed to bound many important measures of statistical dispersion. 
We first define a QBRM:
\begin{definition}[Quantile-based Risk Measure]\label{def:qbrm}
Let $\psi(p)$ be a weighting function such that $\psi(p) \ge 0$ and $\int_0^1 \psi(p) \, dp = 1$. The quantile-based risk measure defined by $\psi$ is
\begin{equation*}
   R_\psi(F):=\int_0^1\psi(p) F^{-}(p) dp.
\end{equation*}
\end{definition}
A QBRM is a linear functional of $F^-$, but quantifying many common group-based risk dispersion measures (e.g. Atkinson index) also involves forms like nonlinear functions of the (inverse) CDF or nonlinear functionals of the (inverse) CDF, and some (like maximum group differences) further involve nonlinear functions of functionals of the loss CDF. Thus a much richer framework for achieving bounds is needed here. 


For clarity, we use $J$ as a generic term to denote either the CDF $F$ or its inverse $F^-$ depending on the context, and summarize the \textbf{\textit{building blocks}} as below: \textbf{(i)} nonlinear functions of $J$, i.e. $\xi(J)$; \textbf{(ii)} functionals in the form of integral of nonlinear functions of $J$, i.e. $\int \psi(p) \xi(J(p))dp$ for a weight function $\psi$; \textbf{(iii)} composed functionals as nonlinear functions of functionals $\xi(T(J))$ for the functional $T(J)$ with forms in \textbf{(ii)}.

\subsection{Standard measures of dispersion}
We start by introducing some classic non-group-based measures of dispersion. Those measures usually quantify wealth or consumption inequality \textit{within} a social group (or a population) instead of quantifying differences among groups. Note that for all of these measures we only consider non-negative losses $X$,
and assume that $\int_0^1 F^-(p)dp > 0$ \footnote{This assumption is included for concise description, but not necessary and set $0/0=0$.}.

\paragraph{Gini family of measures.} Gini coefficient \citep{yitzhaki1979relative,yitzhaki2013gini} is a canonical measure of statistical dispersion, used for quantifying the uneven distribution of resources or losses. It summarizes the Lorenz curve introduced in Figure~\ref{fig:synthetic}. From the definition of Lorenz curve, the greater its curvature is, the greater inequality there exists; the Gini coefficient is measuring the ratio of the area that lies between the line of equality (the $45^\circ$ line) and the Lorenz curve to the total area under the line of equality.

\begin{definition}[Gini coefficient]
For a non-negative random variable $X$, the Gini coefficient is
\begin{align*}
\cG(X):=\frac{\bE|X-X'|}{2\bE X}=\frac{\int_0^1 (2p-1) F^-(p)dp}{\int_0^1 F^-(p)dp},
\end{align*}
where $X'$ is an independent copy of $X$. $\cG(X)\in [0,1]$, with $0$ indicating perfect equality.
\end{definition}
Because of the existence of the denominator in the Gini coefficient calculation, unlike in QBRM we need both an upper and a lower bound for $F^-$ (see Section~\ref{sec:reduction}).  In the appendix, we also introduce the extended Gini family.



\paragraph{Atkinson index.} The Atkinson index \citep{atkinson1970measurement,lazovich2022measuring} is another renowned dispersion measure defined on the non-negative random variable $X$ (e.g., income, loss), and improves over the Gini coefficient in 
that it is useful in determining which end of the distribution contributes most to the observed inequality by choosing an appropriate inequality-aversion parameter $\varepsilon\ge 0$. For instance, the Atkinson index becomes more sensitive to changes at the lower end of the income distribution as $\varepsilon$ increases.

\begin{definition}[Atkinson index]
For a non-negative random variable $X$, for any $\varepsilon\ge 0$, the Atkinson index is defined as the following if $\varepsilon\neq 1$: 
$$\cA (\varepsilon,X):= 1-\frac{(\bE [X^{1-\varepsilon}])^{\frac{1}{1-\varepsilon}}}{\bE[X]}=1-\frac{\big(\int_{0}^1 (F^-(p))^{1-\varepsilon}dp\big)^{\frac{1}{1-\varepsilon}}}{\int_0^1 F^-(p)dp}.$$
And for $\varepsilon=1$, $\cA (1,X):=\lim_{\varepsilon\rightarrow 1} \cA (\varepsilon,X)$, which will converge to a form involving the geometric mean of $X$. $\cA (\varepsilon,X)\in[0,1]$, and $0$ indicates perfect equality. 
\end{definition}

The form of Atkinson index includes a nonlinear function of $F^-$, i.e. $(F^-)^{1-\varepsilon}$, but this type of nonlinearity is easy to tackle since the function is monotonic w.r.t. the range of $F^-$ (see Section~\ref{sec:nonlinear}).
\begin{remark}\label{rm:1}
The reason we study the CDF of $X$ and not $X^{1-\varepsilon}$ is that it allows us to simultaneously control the Atkinson index for all $\varepsilon$'s.
\end{remark}


In addition, there are many other important measures of dispersion involving more complicated types of nonlinearity such as the quantile of extreme observations and mean of range. Those measures are widely used in forecasting weather events or food supply. We discuss and formulate these dispersion measures in the appendix.

\subsection{Group-based measures of dispersion}

Another family of dispersion measures refer to minimizing differences in performance across possibly overlapping groups in the data defined by (protected) attributes like race and gender.  Under equal opportunity \citep{hardt2016equality}, false positive rates are made commensurate, while equalized odds \citep{hardt2016equality} aims to equalize false positive rates and false negative rates among groups.  More general attempts to induce fairly-dispersed outcomes include CVaR fairness \citep{williamson2019fairness} and multi-calibration \citep{hebert-johnson2018multicalibration,kearns2018preventing}.  Our framework offers the flexibility to control a wide range of measures of a group CDF $F_g$, i.e. $T(F_g)$, as well as the variation of $T(F_g)$ between groups. 
As an illustration of the importance of such bounds, \citet{bhutta2020disparities} finds that the median white family in the United States has eight times as much wealth as the median black family; this motivates a dispersion measure based on the difference in group medians.


\paragraph{Absolute/quadratic difference of risks and beyond.}  The simplest way to measure the dispersion of a risk measure (like median) between two groups are quantities such as $|T(F_g)-T(F_{g'})|$ or $[T(F_g)-T(F_{g'})]^2$. Moreover, one can study $\xi(T(F_g)-T(F_{g'}))$ for some general nonlinear functions. 
These types of dispersion measures are widely used in algorithmic fairness \citep{hardt2016equality,le2005rate}.

\paragraph{CVaR-fairness risk measure and its extensions.} In \citep{williamson2019fairness}, the authors further consider a distribution for each group, $\cP_g$, and a distribution over group indices, $\cP_{\text{Idx}}$. Letting $CVaR_{\alpha,\cP_{Z}}(Z):=\bE_{Z\sim \cP_{Z}}[Z|Z>\alpha]$ for any distribution $\cP_Z$, they define the following dispersion for the expected loss of group $g$ (i.e. $\mu_g:=\bE_{X\sim \cP_g}[X]$) for $\alpha\in(0,1)$:
\begin{align*}
\cD_{CV,\alpha}(\mu_g):=CVaR_{\alpha,\cP_{\text{Idx}}}\big(\mu_g-\bE_{g\sim \cP_{\text{Idx}} }[\mu_g]\big).
\end{align*}
A natural extension would be $\cD_{CV,\alpha}(T(F_g))$ for general functional $T(F_g)$, which  we can write in a more explicit way \citep{rockafellar2000optimization}: $$\cD_{CV,\alpha}(T(F_g))=\min_{\rho\in\bR}\left\{\rho+\frac{1}{1-\alpha}\cdot\bE_{g\sim \cP_{\text{Idx}}}[T(F_g)-\rho]_+ \right\}-\bE_{g\sim \cP_{\text{Idx}}}[T(F_g)].$$
The function $[T(F_g)-\rho]_+$ is a nonlinear function of $T(F_g)$, but it is a monotonic function when $\rho$ is fixed and its further composition with the expectation operation is still monotonic, which can be easily dealt with.

\paragraph{Uncertainty quantification of risk measures.} \citet{berkhouch2019spectral} study the problem of uncertainty of risk assessment,
which has important consequences for societal measures.
They formulate a deviation-based approach to quantify uncertainty for risks, which includes forms like:
$
\rho_{\xi}(\cP_{\text{Idx}}):=\bE_{g\sim \cP_{\text{Idx}}}[\xi(T(F_g))]
$
for different types of nonlinear functions $\xi$.
Examples include 
variance uncertainty quantification, where $\bE_{g\sim \cP_{\text{Idx}}}\big(T(F_g)-\bE_{g\sim \cP_{\text{Idx}}}T(F_g)\big)^2$;
and $\bE_\psi[\xi(F^-(\alpha))]:=\int_{0}^1\xi(F^-(\alpha))\psi(\alpha)d\alpha$ to quantify how sensitive the $\alpha$-VaR value is w.r.t its parameter $\alpha$ for some non-negative weight function $\psi$.

\section{Distribution-free control of societal dispersion measures} \label{sec:method}

In this section, we introduce a simple yet general framework to obtain rigorous upper bounds on the statistical dispersion measures discussed in the previous section.
We will provide a high-level summary of our framework in this section, and leave detailed derivations and most examples to the appendix. Our discussion will focus on quantities related to the inverse of CDFs, but similar results could be obtained for CDFs. 

In short, our framework involves two steps: produce upper and lower bounds on the CDF (and thus inverse CDF) of the loss distribution, and use these to calculate bounds on a chosen target risk measure.
First, we will describe our extension of the one-sided bounds in \citep{snell2022quantile} to the two-sided bounds necessary to control many societal dispersion measures of interest.
Then we will describe how these CDF bounds can be post-processed to provide control on risk measures defined by nonlinear functions and functionals of the CDF.  Finally, we will offer a novel optimization method for tightening the bounds for a chosen, possibly complex, risk measure.

\subsection{Methods to obtain confidence two-sided bounds for CDFs}\label{sec:two-side}

For loss values $\{X_i\}_{i=1}^n$, let $X_{(1)} \le \ldots \le X_{(n)}$ denote the corresponding order statistics. For the uniform distribution over [0,1], i.e. $\cU(0, 1)$, let $U_1, \ldots, U_n \sim^{iid} \cU(0, 1)$ denote the corresponding order statistics $U_{(1)} \le \ldots \le U_{(n)}$.  We will also make use of the following:
\begin{proposition}\label{prop:flip}
For the CDF $F$ of $X$, if there exists two CDFs $F_U,F_L$ such that $F_U \succeq F \succeq F_L$, then we have $F^-_L \succeq F^- \succeq F^-_U$.
\end{proposition}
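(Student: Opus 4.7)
The plan is to prove the proposition by unwinding the definition of the inverse CDF, $F^-(p) = \inf\{x : F(x) \ge p\}$, and exploiting the fact that pointwise ordering of CDFs nests their sublevel sets while the infimum operator reverses set inclusion. I read $F_U \succeq F \succeq F_L$ as the pointwise inequalities $F_U(x) \ge F(x) \ge F_L(x)$ for every $x \in \mathbb{R}$; the conclusion $F_L^- \succeq F^- \succeq F_U^-$ means $F_L^-(p) \ge F^-(p) \ge F_U^-(p)$ for every $p$.

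First, I would show the right-hand inequality $F^-(p) \ge F_U^-(p)$. Fix $p$ and consider the sets $S(p) := \{x : F(x) \ge p\}$ and $S_U(p) := \{x : F_U(x) \ge p\}$. Since $F_U(x) \ge F(x)$, any $x$ with $F(x) \ge p$ also satisfies $F_U(x) \ge p$, so $S(p) \subseteq S_U(p)$. The infimum of a superset is no larger than the infimum of the set, hence $F_U^-(p) = \inf S_U(p) \le \inf S(p) = F^-(p)$.

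By exactly the same argument applied to $F \succeq F_L$, we get $S_L(p) \subseteq S(p)$ and therefore $F^-(p) \le F_L^-(p)$. Combining the two yields $F_U^-(p) \le F^-(p) \le F_L^-(p)$, which is the desired two-sided inequality. One minor sanity check I would include is that the sets $S(p)$ in question are nonempty and bounded below in the relevant range of $p$, so that the infima are finite; this follows from $F$ being a CDF (right-continuous, with limits $0$ and $1$ at $\mp\infty$), at least for $p \in (0,1]$, and the boundary cases can be handled under the usual convention $\inf \emptyset = +\infty$.

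The main obstacle here is essentially notational rather than mathematical: there is no real obstruction, since the argument is an immediate consequence of the definition and the monotonicity of $\inf$ with respect to inclusion. The only place where one could slip is mixing up which direction $\succeq$ refers to (stochastic dominance versus pointwise ordering of the CDF values), so I would state the convention explicitly at the start of the proof to avoid any ambiguity, and I would verify the argument on the trivial case $F_U = F = F_L$ as a consistency check.
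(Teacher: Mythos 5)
Your proof is correct, but it takes a different route from the paper's. You argue directly from the definition $F^-(p)=\inf\{x:F(x)\ge p\}$: pointwise domination of CDFs nests the superlevel sets $\{x:F(x)\ge p\}$, and the infimum is antitone under set inclusion. The paper instead works through the generalized-inverse composition identities, namely $G(G^-(p))\ge p$ and $x\ge C^-(C(x))$ (citing Shorack for the latter), chaining $C(G^-(p))\ge G(G^-(p))\ge p$ and then applying $C^-$ to both sides. Your argument is the more elementary of the two: it needs no right-continuity or attainment facts about $F$ (the identity $G(G^-(p))\ge p$ implicitly relies on the superlevel set being closed, which the paper gets from right-continuity of CDFs), it requires no external reference, and it does not even use monotonicity of the functions involved --- only the pointwise ordering. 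The paper's approach, on the other hand, packages the argument into reusable Galois-connection-style identities that recur elsewhere in the analysis of generalized inverses. Your side remarks are also sound: the convention $\inf\emptyset=+\infty$ handles the boundary cases cleanly, and your reading of $\succeq$ as pointwise ordering matches the paper's usage.
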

We use $(\hat{F}^\delta_{n,L},\hat{F}^\delta_{n,U})$ to denote a $(1-\delta)$-confidence bound pair ($(1-\delta)$-CBP), which satisfies $\bP(\hat{F}^{\delta}_{n,U}\succeq F\succeq \hat{F}^{\delta}_{n,L})\ge 1-\delta$.  

We extend the techniques developed in \citep{snell2022quantile},
wherein one-sided (lower) confidence bounds on the uniform order statistics are used to bound $F$.  This is done by considering a one-sided minimum goodness-of-fit (GoF) statistic of the following form: $S := \min_{1 \le i \le n} s_i(U_{(i)}),$ where $s_1, \ldots,s_n : [0,1] \rightarrow \mathbb{R}$ are right continuous monotone nondecreasing functions. Thus,
$\bP(\forall i: F(X_{(i)}) \ge s_i^{-}(s_\delta)) \ge 1-\delta,$
for $s_\delta=\inf_r\{r: \bP(S\ge r)\ge 1-\delta\}$. Given this step function defined by $s_1,\ldots,s_n$, it is easy to construct $\hat{F}^\delta_{n,L}$ via conservative completion of the CDF. 
\citet{snell2022quantile} found that a Berk-Jones bound could be used to choose appropriate $s_i$'s, and is typically much tighter than using the Dvoretzky–Kiefer–Wolfowitz (DKW) inequality to construct a bound.

\subsubsection{A reduction approach to constructing upper bounds of CDFs}\label{sec:reduction}

Now we show how we can leverage this approach to produce two-sided bounds. In the following lemma we show how a CDF upper bound can be reduced to constructing lower bounds. 
\begin{lemma}\label{lemma:two-sided}
For $0\le L_1\le L_2\cdots\le L_n\le 1$, if $\bP(\forall i: F(X_{(i)}) \ge L_i) \ge 1-\delta,$
then, we have 
$$\bP(\forall i: \lim_{\epsilon\rightarrow 0^+}F(X_{(i)}-\epsilon) \le 1-L_{n-i+1}) \ge 1-\delta.$$
Furthermore, let $R(x)$ be defined as $1-L_{n}$ if $x<X_{(1)}$; $1-L_{n-i+1}$ if $X_{(i)}\le x<X_{(i+1)}$ for $i\in\{1,2,\cdots,n-1\}$; $1$ if $X_{(n)}\le x$. Then, $F\preceq R$.
\end{lemma}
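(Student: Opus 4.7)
The plan is to decompose the lemma into its two assertions---the probability statement bounding $F(X_{(i)}-)$ from above, and the deterministic dominance $F \preceq R$---and handle each with a separate tool. The first is obtained from the reflection symmetry of uniform order statistics; the second is a piecewise verification using monotonicity of $F$ on the high-probability event.

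For the probability statement, the crux is the distributional identity
\[
(U_{(1)}, \dots, U_{(n)}) \overset{d}{=} (1 - U_{(n)}, \dots, 1 - U_{(1)}),
\]
which follows from the reflection symmetry $u \mapsto 1-u$ of the uniform law on $[0,1]$. To connect $F(X_{(i)})$ to uniform order statistics I would use the (possibly randomized) probability integral transform: construct $U_1, \dots, U_n \overset{\text{i.i.d.}}{\sim} \mathcal{U}(0,1)$ on an enlarged probability space with $F(X_i-) \le U_i \le F(X_i)$, which passes to order statistics as $F(X_{(i)}-) \le U_{(i)} \le F(X_{(i)})$. In the paper's setup the assumed lower bound is in fact inherited from the uniform bound $\mathbb{P}(\forall i : U_{(i)} \ge L_i) \ge 1 - \delta$ (Berk--Jones or DKW). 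Applying the reflection identity and re-indexing $j = n - i + 1$ gives
\[
\mathbb{P}(\forall j : U_{(j)} \le 1 - L_{n-j+1}) \ge 1 - \delta,
\]
and the left half of the sandwich transfers this to $F(X_{(j)}-) \le 1 - L_{n-j+1}$, which is the stated bound.

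For the dominance $F \preceq R$, I would work pointwise on the good event just established and case-split on the position of $x$ relative to the order statistics. When $x < X_{(1)}$, monotonicity gives $F(x) \le F(X_{(1)}-) \le 1 - L_n = R(x)$. When $X_{(i)} \le x < X_{(i+1)}$, the strict inequality $x < X_{(i+1)}$ yields $F(x) \le F(X_{(i+1)}-)$, which is controlled by the probability statement evaluated at the next index; here the chain $F(X_{(i)}) \le F(X_{(i+1)}-)$ is what absorbs any rightward jump of $F$ at the order statistic itself. When $x \ge X_{(n)}$, the bound $R(x) = 1$ holds trivially.

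The delicate part I expect to be the main obstacle is the bookkeeping around potential discontinuities of $F$: the probabilistic statement only controls left limits $F(X_{(i)}-)$, so any jump of $F$ at an order statistic must be covered by the bound coming from the \emph{next} index. The index shift baked into the definition of $R$ is arranged precisely so that this deterministic dominance goes through on the good event, and carefully checking the shift across the three cases (and aligning it with the fact that $L$ is nondecreasing) is the main verification to perform.
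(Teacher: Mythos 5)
Your argument for the probabilistic half is correct but follows a genuinely different route from the paper's. The paper reflects the \emph{data}: it sets $B=-X$, notes $B_{(n-i+1)}=-X_{(i)}$, rewrites $\lim_{\epsilon\rightarrow 0^+}F(X_{(i)}-\epsilon)\le 1-L_{n-i+1}$ as a statement about $F_B(B_{(n-i+1)})$, and then applies the one-sided machinery to the reflected variable ($B_{(j)}$ distributed as $F_B^-(U_{(j)})$ together with $F_B\circ F_B^-\ge \mathrm{id}$), landing back on the event $\{\forall j:U_{(j)}\ge L_j\}$. You instead reflect the \emph{uniforms} via $(U_{(1)},\dots,U_{(n)})\overset{d}{=}(1-U_{(n)},\dots,1-U_{(1)})$ and couple through a randomized probability integral transform so that $F(X_{(i)}-)\le U_{(i)}\le F(X_{(i)})$. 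Both are valid; yours exhibits the lower- and upper-bound events as images of one another under a single symmetry, at the cost of an auxiliary randomization whose passage to order statistics you assert but should verify (it holds because at least $n-i+1$ of the $U_j$ are $\ge F(X_{(i)}-)$ and at least $i$ of them are $\le F(X_{(i)})$). Note also that, exactly like the paper's appendix restatement, your proof really runs off the hypothesis $\bP(\forall i:U_{(i)}\ge L_i)\ge 1-\delta$ rather than the literal hypothesis $\bP(\forall i:F(X_{(i)})\ge L_i)\ge 1-\delta$ printed in the statement; you flag this correctly.

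The gap is in the ``furthermore'' part, precisely at the bookkeeping you defer. On the good event your (correct) chain gives, for $X_{(i)}\le x<X_{(i+1)}$, the bound $F(x)\le \lim_{\epsilon\rightarrow 0^+}F(X_{(i+1)}-\epsilon)\le 1-L_{n-(i+1)+1}=1-L_{n-i}$. But the statement defines $R(x)=1-L_{n-i+1}$ on that interval, and since the $L_i$ are nondecreasing, $1-L_{n-i+1}\le 1-L_{n-i}$: what you derive is \emph{weaker} than the claimed $R$, so $F\preceq R$ does not follow. Indeed the printed $R$ can fail to dominate: if $F$ has an atom at $X_{(i)}$, then $F(X_{(i)})$ may exceed $1-L_{n-i+1}$ even though $F(X_{(i)}-)\le 1-L_{n-i+1}$ (take $\bP(X=0)=0.9$, $\bP(X=1)=0.1$, $n=2$, $L_1=0.05$, $L_2=0.5$, and $x=X_{(1)}=0$). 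The paper's appendix restatement silently corrects this to $R(x)=1-L_{n-i}$ on $[X_{(i)},X_{(i+1)})$, which is exactly what your derivation establishes. So your argument is sound, but executing the index check you postponed is not a formality: it reveals that the main-text definition of $R$ is off by one and must be replaced by the appendix version for the lemma to be true.
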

Thus, we can simultaneously obtain $(\hat{F}^\delta_{n,L},\hat{F}^\delta_{n,U})$ by setting $L_i=s^-_i(s_\delta)$ and applying (different) CDF conservative completions. 
In practice, the CDF upper bound can be produced via post-processing of the lower bound.  One clear advantage of this approach is that it avoids the need to independently produce a pair of bounds where each bound must hold with probability $1-\delta/2$.


\subsection{Controlling statistical dispersion measures}\label{sec:nonlinear}\label{sec:control}

Having described how to obtain the CDF upper and lower bounds $(\hat{F}^\delta_{n,L},\hat{F}^\delta_{n,U})$, we next turn to using these to control various important risk measures such as the Gini coefficient and group differences.

\subsubsection{Control of nonlinear functions of CDFs}\label{sec:nonlinear}\label{sec:control}

First we consider bounding $\xi(F^-)$, which maps $F^-$ to another function of $\bR$.

\paragraph{Control for a monotonic function.} We start with the simplest case, where $\xi$ is a monotonic function in the range of $X$. For example, if $\xi$ is an increasing function, and with probability at least $1-\delta$, $\hat{F}^{\delta}_{n,U}\succeq F\succeq \hat{F}^{\delta}_{n,L}$; then further by Proposition~\ref{prop:flip}, we have that $\xi(\hat{F}^{\delta,-}_{n,L})\succeq \xi(\hat{F}^{-}) \succeq  \xi(\hat{F}^{\delta,-}_{n,U})$ holds with probability at least $1-\delta$. This property could be utilized to provide bounds for the Gini coefficient or Atkinson index by controlling the numerator and denominator separately as integrals of monotonic functions of $F^-$.

\begin{example}[Gini coefficient]
If given a $(1-\delta)$-CBP $(\hat{F}^\delta_{n,L},\hat{F}^\delta_{n,U})$ and $\hat{F}^\delta_{n,L}\succeq 0$ \footnote{This can be easily achieved by taking truncation over $0$. Also, the construction of $\hat{F}^\delta_{n,L}$ in Section~\ref{app:two-side} always satisies this constraint.}, we can provide the following bound for the Gini coefficient. Notice that 
\begin{align*}
\cG(X)=\frac{\int_0^1 (2p-1) F^-(p)dp}{\int_0^1 F^-(p)dp}=\frac{\int_0^1 2p F^-(p)dp}{\int_0^1 F^-(p)dp}-1.
\end{align*}
Given $F^-(p)\ge 0$ (since we only consider non-negative losses, i.e. $X$ is always non-negative), we know 
$$\cG(X)\le \frac{\int_0^1 2p \hat{F}^{\delta,-}_{n,L}(p)dp}{\int_0^1 \hat{F}^{\delta,-}_{n,U}(p)dp}-1, $$
with probability at least $1-\delta$.
\end{example}

\paragraph{Control for absolute and polynomial functions.} Many societal dispersion measures involve absolute-value functions, e.g., the Hoover index or maximum group differences.
We must also control polynomial functions of inverse CDFs, such as in the CDFs of extreme observations.
For any polynomial function $\phi(s)=\sum_{k=0}\alpha_k s^k$, if $k$ is odd, $s^k$ is monotonic w.r.t. $s$; if $k$ is even, $s^k=|s|^k$. Thus, we can group $\alpha_k s^k$ according to the sign of $\alpha_k$ and whether $k$ is even or odd, and flexibly use the upper and lower bounds already established for the absolute value function and monotonic functions to obtain an overall upper bound.

\begin{example}
If we have $(T^\delta_L(F_g), T^\delta_U(F_g))$ such that $T^\delta_L(F_g)\le T(F_g)\le T^\delta_U(F_g)$ holds for all $g$ we consider, then we can provide high probability upper bounds for 
$$\xi(T(F_{g_1})-T(F_{g_2}))$$
for any polynomial functions or the absolute function $\xi$. For example, with probability at least $1-\delta$
$$|T(F_{g_1})-T(F_{g_2})|\le \max\{|T^\delta_U(F_{g_1})-T^\delta_L(F_{g_2})|,|T^\delta_L(F_{g_1})-T^\delta_U(F_{g_2})|\}.$$
\end{example}

\paragraph{Control for a general function.} To handle general nonlinearities, we introduce the class of functions of bounded total variation. Roughly speaking, if a function is of bounded total variation on an interval, it means that its range is bounded on that interval. This is a very rich class including all continuously differentiable or Lipchitz continuous functions. The following theorem shows that such functions can always be decomposed into two monotonic functions. 

\begin{theorem}\label{thm:1}
For $(\hat{F}^\delta_{n,L},\hat{F}^\delta_{n,U})$, if $\xi$ is a function with bounded total variation on the range of $X$, there exists increasing functions $f_1,f_2$ with explicit and calculable forms, such that with probability at least $1-\delta$,
$\xi(F^-)\preceq f_1(\hat{F}^{\delta,-}_{n,L})-f_2(\hat{F}^{\delta,-}_{n,U}).$
\end{theorem}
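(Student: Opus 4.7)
The plan is to invoke the classical Jordan decomposition of functions of bounded total variation: any such function on an interval can be written as the difference of two non-decreasing functions, with explicit formulas in terms of the total variation itself. First I would fix an interval $[a,b]$ containing the range of $X$ and, for $x \in [a,b]$, define $V(x)$ to be the total variation of $\xi$ on $[a,x]$. A standard real-analysis calculation shows that both $V$ and $V - \xi + \xi(a)$ are non-decreasing on $[a,b]$. Setting
$$f_1(x) := V(x) + \xi(a), \qquad f_2(x) := V(x) - \xi(x) + \xi(a)$$
therefore yields two explicit, computable, non-decreasing functions satisfying $\xi(x) = f_1(x) - f_2(x)$ pointwise on $[a,b]$. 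Both are calculable from $\xi$ alone (in closed form whenever $V$ is, and numerically otherwise), which meets the theorem's requirement of an ``explicit and calculable form.''

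With the decomposition in hand, the rest is a short chain of monotonicity steps. On the event of probability at least $1-\delta$ on which $\hat{F}^\delta_{n,U} \succeq F \succeq \hat{F}^\delta_{n,L}$ holds, Proposition~\ref{prop:flip} gives the pointwise inequalities $\hat{F}^{\delta,-}_{n,L} \succeq F^- \succeq \hat{F}^{\delta,-}_{n,U}$. Since $f_1$ is non-decreasing, composing gives $f_1(\hat{F}^{\delta,-}_{n,L}) \succeq f_1(F^-)$ pointwise; since $f_2$ is also non-decreasing, applying it to $F^- \succeq \hat{F}^{\delta,-}_{n,U}$ gives $f_2(F^-) \succeq f_2(\hat{F}^{\delta,-}_{n,U})$, equivalently $-f_2(F^-) \preceq -f_2(\hat{F}^{\delta,-}_{n,U})$. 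Adding the two pointwise inequalities yields
$$\xi(F^-) \;=\; f_1(F^-) - f_2(F^-) \;\preceq\; f_1(\hat{F}^{\delta,-}_{n,L}) - f_2(\hat{F}^{\delta,-}_{n,U}),$$
which is exactly the desired bound, holding on the same $(1-\delta)$-probability event that supplies the CBP.

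There is no real obstacle in the proof itself; the only subtlety worth highlighting is that the bound inherits its probability directly from the $(1-\delta)$-CBP without any union-bound splitting between upper and lower tails, which is a concrete payoff of the reduction construction in Section~\ref{sec:reduction}. A secondary point worth noting is that the construction of $f_1$ and $f_2$ depends only on $\xi$ and not on the CDF estimate, so the decomposition can be precomputed once for each risk measure of interest and then reused for any future dataset. If one wanted to sharpen the theorem (e.g., to deliver a matching lower bound on $\xi(F^-)$), the same decomposition would yield $\xi(F^-) \succeq f_1(\hat{F}^{\delta,-}_{n,U}) - f_2(\hat{F}^{\delta,-}_{n,L})$ by symmetric reasoning.
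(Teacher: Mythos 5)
Your proof is correct and takes essentially the same route as the paper's: both invoke the Jordan decomposition of a bounded-variation $\xi$ into increasing pieces $f_1(x)=V(x)$ (up to an additive constant) and $f_2(x)=V(x)-\xi(x)$, then apply the monotonicity of each piece to the corresponding side of the two-sided inverse-CDF bound obtained from Proposition~\ref{prop:flip}. The only cosmetic difference is that the paper anchors the total variation at $0$ (using $X\ge 0$) rather than at a general left endpoint $a$, which does not affect the argument.
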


As an example, recall that \citet{berkhouch2019spectral} studies forms like $$\int_{0}^1\xi(F^-(\alpha))\psi(\alpha)d\alpha$$ to quantify how sensitive the $\alpha$-VaR value is w.r.t its parameter $\alpha$. For nonlinear functions beyond polynomials, consider the example where $\xi = e^x+e^{-x}$. This can be readily bounded since it is a mixture of monotone functions.

\subsubsection{Control of nonlinear functionals of CDFs and beyond}\label{sec:beyond}
Finally, we show how our techniques from the previous section can be applied to handle the general form $$\int_{0}^1\xi(T(F^-(\alpha)))\psi(\alpha)d\alpha$$ 
where $\psi$ can be a general function (not necessarily non-negative) and  $\xi$ can be a {\it functional} of the inverse CDF. 
To control $\xi(T(F^-))$, we can first obtain two-sided bounds for $T(F^-)$ if $T(F^-)$ is in the class of QBRMs or in the form of $\int \psi(p) \xi_2(F^-(p))dp$ for some nonlinear function $\xi_2$
(as in \citep{berkhouch2019spectral}).
We can also generalize the weight functions in QBRMs from non-negative to  general weight functions once we notice that $\psi$ can be decomposed into two non-negative functions, i.e. $\psi=\max\{\psi,0\}-\max\{-\psi,0\}$. Then, we can provide upper bounds for terms like $\int \max\{\psi,0\} \xi(F^-(p))dp$ by adopting an upper bound for $\xi(F^-)$.

\subsection{Numerical optimization towards tighter bounds for statistical functionals} \label{sec:num}

Having described our framework for obtaining CDF bounds and controlling rich families of risk measures, we return to the question of how to produce the CDF bounds.  One drawback of directly using the bound returned by Berk-Jones is that it is not weight function aware, i.e., it does not leverage knowledge of the target risk measures.  
This motivates
the following numerical optimization method, which shows significant improvement over previous bounds including DKW and Berk-Jones bounds (as well as the truncated version proposed in \citep{snell2022quantile}). 

Our key observation is that for any $0\le L_1\le\cdots\le L_n\le 1$, we have $\bP\big(\forall i,~ F(X_{(i)})\ge L_i\big)\ge n!\int^1_{L_n}dx_n\int^{x_{n}}_{L_{n-1}}dx_{n-1}\cdots\int^{x_{2}}_{L_{1}}dx_{1},$ where the right-hand side integral is a function of $\{L_i\}_{i=1}^n$ and its partial derivatives can be calculated exactly by the package in \citep{moscovich2016exact}.
Consider controlling $\int_0^1\psi(p)F^-(p)dp$ as an example. For any $\{L_i\}_{i=1}^n$ satisfying $$\bP\big(\forall i,~ F(X_{(i)})\ge L_i\big)\ge 1-\delta,$$ one can use conservative CDF completion to obtain $\hat{F}^{\delta}_{n,L}$, i.e. 
$$\int_0^1\psi(p)\xi(\hat{F}^{\delta,-}_{n,L}(p))dp= \sum_{i=1}^{n+1}\xi(X_{(i)})\int_{L_{i-1}}^{L_{i}}\psi(p)dp,$$ where $L_{n+1}$ is $1$, $L_0=0$, and $X_{(n+1)}=\infty$ or a known upper bound for $X$. Then, we aim to obtain a small value for 
\vspace{0.1in}
$$\sum_{i=1}^{n+1}\xi(X_{(i)})\int_{L_{i-1}}^{L_{i}}\psi(p)dp$$
such that
$\bP\big(\forall i,~ F(X_{(i)})\ge L_i\big)\ge 1-\delta,~ \text{and}~0\le L_1\le \cdots\le L_n\le 1.$
We optimize the above problem with 
gradient descent and  a simple post-processing procedure to make sure the obtained $\{\hat L_i\}_{i=1}^n$ strictly satisfy the above constraints. 
In practice, we re-parameterize $\{L_i\}_{i=1}^n$ with a network $\phi_\theta$ that maps $n$ random seeds to a function of the $L_i$'s, and transform the optimization objective from $\{L_i\}_{i=1}^n$ to $\theta$. 
We find that a simple parameterized neural network model with 3 fully-connected hidden layers of dimension 64 is enough for good performance and robust to hyper-parameter settings. 

\paragraph{Post-processing for a rigorous guarantee for constraints.} Notice that we may not ensure the constraint $n!\int^1_{L_n(\theta)}dx_n\int^{x_{n}}_{L_{n-1}(\theta)}dx_{n-1}\cdots\int^{x_{2}}_{L_{1}(\theta)}dx_{1}\ge 1-\delta$ is satisfied in the above optimization because we may use surrogates like Langrange forms in our optimization processes. To make sure the constraint is strictly satisfied, we can do simple post-processing.  Let us denote the $L_i$'s obtained by optimizing \eqref{eq:optimize} as  $L_i(\hat{\theta})$. Then, we look for $\gamma^*\in[0,L_n(\hat\theta)]$ such that 

$$\gamma^*=\inf \{\gamma:n!\upsilon(L_1(\hat\theta)-\gamma,\cdots, L_n(\hat\theta)-\gamma,1)\ge 1-\delta,\gamma\ge 0\}.$$

Notice that there is always a feasible solution.  We can use binary search to efficiently find (a good approximate of) $\gamma^*$.


\section{Experiments}\label{sec:experiments}


With our experiments, we aim to
examine the contributions of our methodology in two areas: 
bound formation and responsible model selection.


\subsection{Learn then calibrate for detecting toxic comments}\label{sec:civil_comments}

Using the CivilComments dataset \citep{borkan2019nuanced}, we study the application of our approach to toxic comment detection under group-based fairness measures.  CivilComments is a large dataset of online comments labeled for toxicity as well as the mention of protected sensitive attributes such as gender, race, and religion. Our loss function is the Brier Score, a proper scoring rule that measures the accuracy of probabilistic predictions, and we work in the common setting where a trained model is calibrated post-hoc to produce confidence estimates that are more faithful to ground-truth label probabilities.  
We use a pre-trained toxicity model and apply a Platt scaling model controlled by a single parameter to optimize confidence calibration. 
Our approach is then used to select from a set of hypotheses, determined by varying the scaling parameter in the range $[0.25, 2]$ (where scaling parameter 1 recovers the original model).
See the Appendix 
for more details on the experimental settings and our bound optimization technique.

\subsubsection{Bounding complex expressions of group dispersion}\label{sec:end_to_end}

First, we investigate the full power of our framework by applying it to a complex statistical dispersion objective.  Our overall loss objective considers both expected mean across groups as well as the maximum difference between group medians, and can be expressed as:
$\mathcal{L}=\mathbbm{E}_g [T_1(F_g)] + \lambda \sup_{g,g'} |T_2(F_g)-T_2(F_{g'})|$,
where $T_1$ is expected loss and $T_2$ is a smoothed version of a median 
(centered around $\beta=0.5$ with spread parameter $a=0.1$).  Groups are defined by intersectional attributes:
$g \in G = \{\text{black female, white female, black male, white male}\}$.
We use 100 and 200 samples from each group, and select among 50 predictors.
For each group, we use our numerical optimization framework to optimize a bound on $\mathcal{O}=T_1(F_g)+T_2(F_g)$ using the predictor (and accompanying loss distribution) chosen under the Berk-Jones method.  Results are shown in Table~\ref{tab:civil_comments_full}.  We compare our numerically-optimized bound (NN-Opt.) to the bound given by Berk-Jones as well as an application of the DKW inequality to lower-bounding a CDF.

Our framework enables us to choose a predictor that fits our specified fairness criterion, and produces reasonably tight bounds given the small sample size and the convergence rate of $\frac{1}{\sqrt{n}}$.  Moreover, there is a large gain in tightness from numerical optimization in the case where $n=100$, especially with respect to the bound on the maximum difference in median losses (0.076 vs. 0.016).  These results show that a single bound can be flexibly optimized to improve on multiple objectives at once via our numerical method, a key innovation point for optimizing bounds reflecting complex societal concerns like differences in group medians \citep{bhutta2020disparities}.

\begin{table}[!ht]
    \caption{Applying our full framework to control an objective considering expected group loss as well as a maximum difference in group medians for $n=100$ and $n=200$ samples.}
    \label{tab:civil_comments_full}
    \centering
    \begin{tabular}{lccc@{\hskip 0.26in}ccc}
    \toprule
            ~ & \multicolumn{3}{c}{$n=100$} & \multicolumn{3}{c}{$n=200$} \\
            Method &  Exp. Grp. &  Max Diff. &  Total &  Exp. Grp. &  Max Diff. &  Total \\
    \midrule
     DKW  &         0.36795 &        0.90850 &    1.27645 &         0.32236 &        0.96956 &    1.29193 \\
    BJ &         0.34532 &        0.07549 &    0.42081 &         0.31165 &        0.00666 &    0.31831 \\
    NN-Opt. (ours) &         \textbf{0.32669} &        \textbf{0.01612} &    \textbf{0.34281} &         \textbf{0.30619} &        \textbf{0.00292} &    \textbf{0.30911} \\
      
              Empirical &         0.20395 &        0.00004 &    0.20399 &         0.20148 &        0.00010 &    0.20158 \\
    \bottomrule
    \end{tabular}
\end{table}

\subsubsection{Optimizing bounds on measures of group dispersion}\label{sec:optimizing_exp}

Having studied the effects of applying the full framework, we further investigate whether our method for numerical optimization can be used to get tight and flexible bounds on functionals of interest.  First, $\beta$-CVaR is a canonical tail measure, and we bound the loss for the worst-off $1-\beta$ proportion of predictions (with $\beta=0.75$).  Next, we bound a specified interval of the VaR ($[0.5,0.9]$), which is useful when a range of quantiles of interest is known but flexibility to answer different queries within the range is important.  Finally, we consider a worst-quantile weighting function $\psi(p)=p$, which penalizes higher loss values on higher quantiles, and study a smooth delta function around $\beta=0.5$, a more robust version of a median measure.  We focus on producing bounds using only 100 samples from a particular intersectionally-defined protected group, in this case black females, and all measures are optimized with the same hyperparameters.  The bounds produced via numerical optimization (NN-Opt.) are compared to the bounds in \citep{snell2022quantile} (as DKW has been previously shown to produce weak CDF bounds), including the typical Berk-Jones bound as well as a truncated version tailored to particular quantile ranges.  See Table~\ref{tab:civil_comments_opt} and the Appendix for results.

The numerical optimization method induces much tighter bounds than Berk-Jones on all measures, and also improves over the truncated Berk-Jones where it is applicable.  Further, whereas the truncated Berk-Jones bound will give trivial control outside of $[\beta_{min}, \beta_{max}]$, the numerically-optimized bound not only retains a reasonable bound on the entire CDF, but even improves on Berk-Jones with respect to the bound on expected loss in all cases.  For example, after adapting to CVaR, the numerically-optimized bound gives a bound on the expected loss of 0.23, versus 0.25 for Berk-Jones and 0.50 for Truncated Berk-Jones.  Thus numerical optimization produces both the best bound in the range of interest as well as across the rest of the distribution, showing the value of adapting the bound to the particular functional and loss distribution while still retaining the distribution-free guarantee.

\begin{table}[!ht]
    \caption{Optimizing bounds on measures for protected groups.}
    \label{tab:civil_comments_opt}
    \centering
    \begin{tabular}{lcccc}
    \toprule
       Method &    CVaR &  VaR-Interval &   Quantile-Weighted &  Smoothed-Median \\
    \midrule
        Berk-Jones & 0.91166 &       0.38057 & 0.19152 & 0.00038 \\
         Truncated Berk-Jones & 0.86379 &       0.34257 & - & - \\
         NN-Opt. (ours) & \textbf{0.85549} &       \textbf{0.32656} & \textbf{0.17922} & \textbf{0.00021} \\
    \bottomrule
    \end{tabular}
\end{table}


\subsection{Investigating bounds on standard measures of dispersion}\label{sec:standard_exp}

Next, we aim to explore the application of our approach to responsible model selection under non-group-based fairness measures, and show how using our framework leads to a more balanced distribution of loss across the population.  Further details for both experiments can be found in the Appendix.

\subsubsection{Controlling balanced accuracy in detection of genetic mutation}\label{sec:rxrx1}

RxRx1 \citep{taylor2019rxrx1} is a task where the input is a 3-channel image of cells obtained by fluorescent microscopy, the label indicates which of 1,139 genetic treatments the cells received, and there is a batch effect that creates a challenging distribution shift across domains.  Using a model trained on the train split of the RxRx1 dataset, we evaluate our method with an out-of-distribution validation set to highlight the distribution-free nature of the bounds.  We apply a threshold to model output in order to produce prediction sets, or sets of candidate labels for a particular task instance.  Prediction sets are scored with a balanced accuracy metric that equally weights sensitivity and specificity, and our overall objective is:
$\mathcal{L}=T_1(F) + \lambda T_2(F)$,
where $T_1$ is expected loss, $T_2$ is Gini coefficient, and $\lambda=0.2$.  
We choose among 50 predictors (i.e. model plus threshold) and use 2500 population samples to produce our bounds. Results are shown in Figure~\ref{fig:rxrx1_gini}.  

\begin{figure}[!ht]
\centering
    \includegraphics[width=0.48\textwidth]{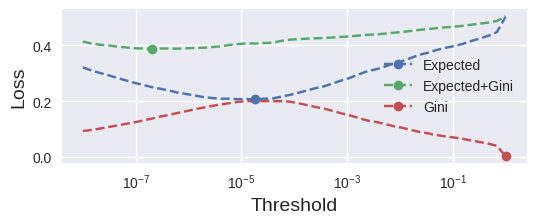}
    \includegraphics[width=0.50\textwidth]{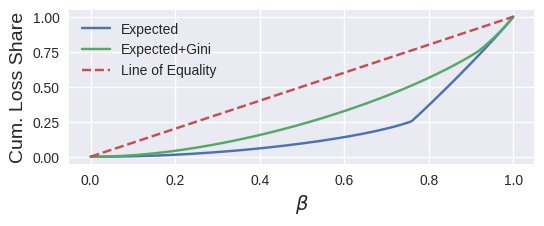}
    \caption{Left: Bounds on the expected loss, scaled Gini coefficient, and total objective across different hypotheses.  Right: Lorenz curves induced by choosing a hypothesis based on the expected loss bound versus the bound on the total objective.  The y-axis shows the cumulative share of the loss that is incurred by the best-off $\beta$ proportion of the population, where a perfectly fair predictor would produce a distribution along the line $y=x$.}
    \label{fig:rxrx1_gini}
\end{figure}

The plot on the left shows how the bounds on the expected loss $T_1$, scaled Gini coefficient $\lambda T_2$, and total objective $\mathcal{L}$ vary across the different hypotheses  (i.e. model and threshold combination for
producing prediction sets).  The bold points indicate the optimal threshold choice for each quantity.  On the right is shown the Lorenz curves (a typical graphical expression of Gini) of the loss distributions induced by choosing a hypothesis based on the expected loss bound versus the bound on the total objective.  Incorporating the bound on Gini coefficient in hypothesis selection leads to a more equal loss distribution.  Taken together, these figures illustrate how the ability to bound a non-group based dispersion measure like the Gini coefficient can lead to less skewed outcomes across a population, a key goal in societal applications.

\subsubsection{Producing recommendation sets for the whole population}\label{sec:ml-1m}

Using the MovieLens dataset \citep{Harper2015-cx}, we test whether better control on another important non-group based dispersion measure, the Atkinson index (with $\epsilon=0.5$), leads to a more even distribution of loss across the population. 
We train a user/item embedding model, and compute a loss that balances precision and recall for each set of user recommendations.  Results are shown in Figure~\ref{fig:ml-1m_atkinson}.  Tighter control of the Atkinson index leads to a more dispersed distribution of loss across the population, even for subgroups defined by protected attributes like age and gender that are unidentified for privacy or security reasons.

\begin{figure}[!ht]
\centering
    \includegraphics[width=0.33\textwidth]{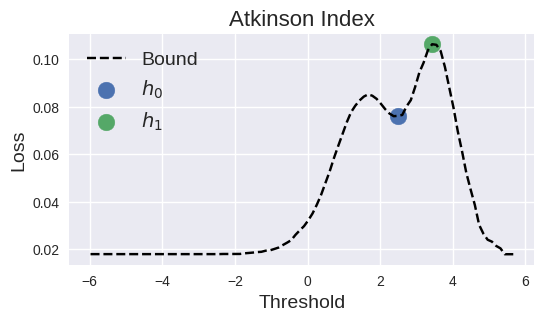}
    \includegraphics[width=0.325\textwidth]{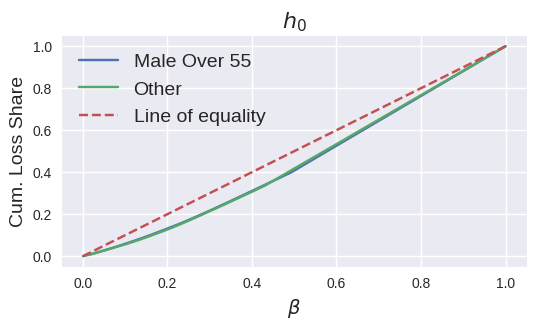}
    \includegraphics[width=0.325\textwidth]{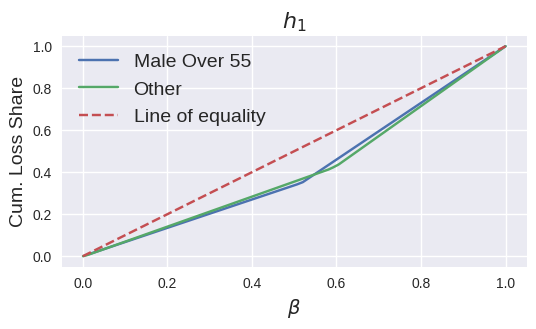}
    \caption{We select two hypotheses $h_0$ and $h_1$ with different bounds on Atkinson index produced using 2000 validation samples, and once again visualize the Lorenz curves induced by each.  Tighter control on the Atkinson index leads to a more equal distribution of the loss (especially across the middle of the distribution, which aligns with the choice of $\epsilon$), highlighting the utility of being able to target such a metric in conservative model selection.}
    \label{fig:ml-1m_atkinson}
\end{figure}
\vspace{-0.15in}
\section{Related work} 
The field of distribution-free uncertainty quantification has its roots in conformal prediction \citep{shafer2008}.  The coverage guarantees of conformal prediction have recently been extended and generalized to controlling the expected loss of loss functions beyond coverage \citep{angelopoulos2021learn, bates_distribution-free_2021}.  
The framework proposed by \citet{snell2022quantile} offers the ability to select predictors beyond expected loss, to include a rich class of quantile-based risk measures (QBRMs) like CVaR and intervals of the VaR; they also introduce a method for achieving tighter bounds on certain QBRMs by focusing the statistical power of the Berk-Jones bound on a certain quantile range. Note that these measures cannot cover the range of dispersion measures studied in this work.

There is a rich literature studying both standard and group-based statistical dispersion measures, and their use in producing fairer outcomes in machine learning systems.  
Some work in fairness has aimed at achieving coverage guarantees across groups~\citep{romano2019malice,romano2020achieving}, but to our knowledge there has not been prior work exploring controlling loss functions beyond coverage,
such as the plethora of loss functions aimed at characterizing fairness,
which can be expressed as group-based measures (cf. Section 3.2).
Other recent fairness work has adapted some of the inequality measures found in economics.
\citet{williamson2019fairness} aims to enforce that outcomes are not too different across groups defined by protected attributes, and introduces a convex notion of group CVaR, and \citet{romano2019malice} propose a DFUQ method of equalizing coverage between groups.
\citet{lazovich2022measuring}
studies distributional inequality measures like Gini and Atkinson index since demographic group information is often unavailable, while
\citet{do2022two} use the notion of Lorenz efficiency to generate rankings that increase the utility of both the worst-off users and producers.

\section{Conclusion}
In this work, we focus on a rich class of statistical dispersion measures, both standard group-based, and show how these measures 
can be controlled.  In addition, we offer a novel numerical optimization method for achieving tighter bounds on these quantities.  We investigate the effects of applying our framework via several experiments and show that our methods lead to more fair model selection and tighter bounds.  Limitations include: the assumption that test data is from the same distribution as validation data; the scope of hypotheses/predictors we can select from is limited (due to theoretical constraints); and the generated bounds may not be tight, depending on the amount of available validation data and unavoidable limits of the techniques used to produce the bounds.
Nonetheless we believe our study offers a significant step towards the sort of thorough and transparent validation that is critical for applying machine learning algorithms to applications with societal implications. 

\bibliography{ref1,ref2}

\newpage

\appendix

\noindent\textbf{\Large Appendix}

\paragraph{Broader impact.} 
The broader impact of the proposed framework is significant, as it extends the
ability to gain trust in machine learning systems. However there are
important concerns and limitations.

\begin{itemize}
\item{\textbf{Focus on performance metrics}}
In this paper we propose a range of performance metrics, which extend
well beyond standard metrics concerning expected loss. However, in
many situations these metrics are not sufficient to capture the
effects of the machine learning system.
Often a number of different metrics are required to provide a clearer
picture of model performance,
while some effects are difficult to capture in any metric.
Also, while the measures studied offer the ability to more evenly distribute a quantity
across a population, they do not offer guarantees to individuals.
Finally, achieving a more equal distribution of the relevant quantity
(e.g., loss or income) may have negative impacts on some segments of the
population.

\item{\textbf{Limitations}} These are summarized in the Conclusion but
are expanded upon here.
An important assumption in this work, and in distribution-free
uncertainty quantification more generally, is that the examples seen in deployment
are drawn from the same distribution as those in the validation set that
are used to construct the bounds. Although this is an active area of
research, here we make this assumption, and the quality of the bounds
produced may degrade if the assumption is violated.
A second limitation is that the scope of hypotheses and predictors we
can select from is limited, due to theoretical constraints: a
correction must be performed based on the size of the hypothesis set.
Finally, the generated bounds may not be
tight, depending on the amount of available validation data and
unavoidable limits of the techniques used to produce the bounds. We
did some comparisons to Empirical values of the measures we obtained
bounds for in the experiments; more extensive studies would be useful
to elucidate the value of the bounds in	practice.
\end{itemize}

\paragraph{Organization of the Appendix.} (1) In Appendix~\ref{app:detail}, we provide detailed statements and derivations of our methodology presented 
Section~\ref{sec:control}, including how to obtain bounds for those measures mentioned in Section~\ref{sec:measure}; (2) in Appendix~\ref{sec:othermeasures}, we introduce further societal dispersion measures, beyond those presented in Section~\ref{sec:measure} and corresponding bounds; (3) in Appendix~\ref{app:ext}, we investigate the extension of our results to multi-dimensional settings; (4) lastly, in Appendix~\ref{app:exp_details} and \ref{app:add_results}, we provide more complete details and results from our experiments (Section~\ref{sec:experiments}).

\section{Derivations and proofs for bounding methods}\label{app:detail}
 Section~\ref{app:control}, we first consider how to control, or provide upper bounds on, various quantities when we are given $(\hat{F}^{\delta}_{n,L},\hat{F}^{\delta}_{n,U})$, which are constructed by $\{X_{i}\}_{i=1}^n$, such that 
$$\bP(\hat{F}^{\delta,-}_{n,L}\preceq F \preceq  \hat{F}^{\delta,-}_{n,U})\ge 1-\delta$$
where the randomness is taken over $\{X_{i}\}_{i=1}^n$.

Then, in Section~\ref{app:two-side}, we will show how we obtain $(\hat{F}^{\delta,-}_{n,L},\hat{F}^{\delta,-}_{n,U})$ by extending the arguments in \citet{snell2022quantile}. In addition, we show details in Section~\ref{app:num} on how we go beyond the methods in \citet{snell2022quantile} and provide a numerical optimization method for tighter bounds.

\paragraph{Proof of Proposition~\ref{prop:flip}.} We briefly describe the the proof for Proposition~\ref{prop:flip}. The proof is mainly based on \citet{snell2022quantile}, but we include it here for completeness. Notice for any non-decreasing function $G:\bR\rightarrow \bR$ (not just a CDF), there exists the (general) inverse of $G$ as $G^-(p)=\inf\{x:G(x)\ge p\}$ for any $p\in\bR$.

\begin{proposition}[Restatement of Proposition~\ref{prop:flip}]
For the CDF $F$ of $X$, if there exists two increasing functions $F_U,F_L$ such that $F_U \succeq F \succeq F_L$, then we have $F^-_L \succeq F^- \succeq F^-_U$.
\end{proposition}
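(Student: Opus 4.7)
The plan is to prove the two inequalities $F^-_U \preceq F^-$ and $F^- \preceq F^-_L$ separately, each by a direct set-inclusion argument from the definition of the generalized inverse $G^-(p) = \inf\{x : G(x) \geq p\}$. Fix an arbitrary $p \in \mathbb{R}$ and reduce the pointwise inequality statement to comparing the level sets $\{x : F_U(x) \geq p\}$, $\{x : F(x) \geq p\}$, and $\{x : F_L(x) \geq p\}$.

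For the upper bound on $F^-$, I would observe that pointwise $F_U(x) \geq F(x)$ implies that any $x$ satisfying $F(x) \geq p$ also satisfies $F_U(x) \geq p$, so $\{x : F(x) \geq p\} \subseteq \{x : F_U(x) \geq p\}$. Taking infima reverses the inclusion, giving $F^-_U(p) \leq F^-(p)$. The symmetric argument handles the lower side: $F(x) \geq F_L(x)$ yields $\{x : F_L(x) \geq p\} \subseteq \{x : F(x) \geq p\}$, hence $F^-(p) \leq F^-_L(p)$. Since $p$ was arbitrary, both pointwise inequalities hold everywhere, which is exactly $F^-_L \succeq F^- \succeq F^-_U$.

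There is essentially no hard step here: the entire argument is bookkeeping with the definition of the generalized inverse and the fact that taking $\inf$ is antitone with respect to set inclusion. The only minor subtlety worth flagging is that $F^-$ is defined for a CDF while the statement is made for arbitrary increasing $F_U$ and $F_L$; the restatement in the excerpt addresses this by noting that the generalized inverse $G^-(p) = \inf\{x : G(x) \geq p\}$ is well-defined for any non-decreasing $G : \mathbb{R} \to \mathbb{R}$, so the argument above goes through verbatim without relying on the right-continuity or normalization properties of CDFs. No appeal to earlier results in the paper is needed.
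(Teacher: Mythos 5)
Your proof is correct, but it takes a different route from the paper's. You argue directly at the level of level sets: pointwise domination $F_U \succeq F$ gives the inclusion $\{x : F(x) \geq p\} \subseteq \{x : F_U(x) \geq p\}$, and the antitonicity of $\inf$ under set inclusion immediately yields $F_U^-(p) \leq F^-(p)$ (with the convention $\inf\emptyset = +\infty$ handling the degenerate case), and symmetrically for $F_L$. The paper instead goes through the composition inequalities: it uses $G(G^-(p)) \geq p$, applies $C^-$ to both sides of $C(G^-(p)) \geq p$, and invokes $x \geq C^- \circ C(x)$ (citing Shorack), chaining these to get $G^-(p) \geq C^-(p)$. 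Your argument is more elementary and, notably, more robust: the identity $G(G^-(p)) \geq p$ that the paper asserts ``by the definition of the general inverse'' actually requires right-continuity of $G$ (it fails, e.g., for $G = \mathbf{1}\{x > 0\}$ at $p=1$), so the paper's proof as written is slightly imprecise for the arbitrary non-decreasing $F_U, F_L$ appearing in the restatement, whereas your set-inclusion argument uses nothing beyond the definition of $\inf$ and goes through verbatim for any non-decreasing functions. What the paper's approach buys in exchange is a reusable pair of composition lemmas that recur elsewhere in the order-statistics literature it draws on; for this particular proposition your route is the cleaner one.
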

\begin{proof}
For any two non-decreasing function $G(p)$ and $C(p)$, by the definition of the general inverse function, $G(G^{-}(p)) \ge p$. If $C\succeq G$, we therefore have $C(G^{-}(p)) \ge G(G^{-}(p)) \ge p$. Applying $C^{-}$ to both sides yields $C^{-}(C(G^{-}(p))) \ge C^{-}(p)$. But $x \ge C^{-} \circ C(x)$ (see e.g. Proposition 3 on p. 6 of \citet{shorack_empirical_2009}) and thus $G^{-}(p) \ge C^{-}(p)$. Plugging in $F$ and $F_U$ as $G$ and $C$, this can yield $F^-\succeq F^-_U$. The other direcion is similar.
\end{proof}
\subsection{
Control of nonlinear functions of CDFs (Section~\ref{sec:control})}\label{app:control}

\subsubsection{Control for monotonic functions} 
Recall that we start with the simplest case where $\xi$ is a monotonic function in the range of $X$. It is straightforward to have the following claim.
\begin{claim}
If we have $\hat{F}^{\delta,-}_{n,L}\preceq F \preceq  \hat{F}^{\delta,-}_{n,U}$ with probability at least $1-\delta$ for some $\delta\in (0,1)$, if $\xi$ is an increasing function, then
$$\xi(\hat{F}^{\delta,-}_{n,L})\succeq \xi(\hat{F}^{-}) \succeq  \xi(\hat{F}^{\delta,-}_{n,U})$$
with probability at least $1-\delta$. Similarly, if $\xi$ is a decreasing function, then $\xi(\hat{F}^{\delta,-}_{n,L})\preceq \xi(\hat{F}^{-}) \preceq  \xi(\hat{F}^{\delta,-}_{n,U})$ with probability at least $1-\delta$.
\end{claim}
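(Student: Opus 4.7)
The plan is to obtain the claim as a one-step consequence of the pointwise nature of the function order $\preceq$ combined with the deterministic, pointwise monotonicity of $\xi$. By the very definition of $\preceq$ on functions, the hypothesized event
$$E := \{\hat{F}^{\delta,-}_{n,L}\preceq F^- \preceq \hat{F}^{\delta,-}_{n,U}\}$$
is exactly the intersection over all $p\in[0,1]$ of the scalar inequalities $\hat{F}^{\delta,-}_{n,L}(p)\le F^-(p)\le \hat{F}^{\delta,-}_{n,U}(p)$. If one prefers to start from a CDF two-sided bound $\hat{F}^{\delta}_{n,L}\preceq F \preceq \hat{F}^{\delta}_{n,U}$, Proposition~\ref{prop:flip} losslessly converts it to this inverse-CDF form, so the argument proceeds on $E$ without any probability loss.

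Next, I would fix an arbitrary realization in $E$ and an arbitrary $p\in[0,1]$. The three scalars $\hat{F}^{\delta,-}_{n,L}(p)$, $F^-(p)$, $\hat{F}^{\delta,-}_{n,U}(p)$ all lie in the range of $X$, which is precisely the set on which $\xi$ is assumed monotonic. Applying $\xi$ to the scalar chain therefore preserves the ordering if $\xi$ is increasing and reverses it if $\xi$ is decreasing. Quantifying over all $p$ reassembles these pointwise comparisons into the target functional ordering $\xi(\hat{F}^{\delta,-}_{n,L})\succeq \xi(F^-)\succeq \xi(\hat{F}^{\delta,-}_{n,U})$ in the increasing case, and the flipped chain in the decreasing case. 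Because $\xi$ acts deterministically and pointwise, this new relation holds on the very same event $E$, so it inherits the probability $\ge 1-\delta$ with no union-bound penalty.

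The hardest part is essentially bookkeeping rather than analysis, and it amounts to two sanity checks: (i) the event on which we apply $\xi$ really coincides with the event on which the inverse-CDF bounds hold, which is true because $\xi$ is a fixed deterministic function composed pointwise with the (random) inverse CDFs; and (ii) the arguments fed to $\xi$ lie inside its monotonicity domain, which is guaranteed since $\hat{F}^{\delta,-}_{n,L}$ and $\hat{F}^{\delta,-}_{n,U}$ are built from the order statistics of $X$ and therefore take values in the range of $X$. This is why the preceding text describes the claim as ``straightforward,'' and no deeper machinery beyond Proposition~\ref{prop:flip} is needed.
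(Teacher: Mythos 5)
Your argument is correct and matches the paper's treatment exactly --- the paper offers no proof beyond calling the claim straightforward, and the intended content is precisely the pointwise-monotonicity argument you give, with Proposition~1 supplying the inverse-CDF form of the event so that no probability is lost. One bookkeeping note: you (like the paper's own statement) write the hypothesized event as $\hat{F}^{\delta,-}_{n,L}\preceq F^-\preceq \hat{F}^{\delta,-}_{n,U}$ yet conclude $\xi(\hat{F}^{\delta,-}_{n,L})\succeq\xi(F^-)\succeq\xi(\hat{F}^{\delta,-}_{n,U})$ for increasing $\xi$; the directions only cohere once one recalls that Proposition~1 turns the CDF \emph{lower} bound into the pointwise \emph{upper} bound on $F^-$, so the event you condition on should read $\hat{F}^{\delta,-}_{n,L}\succeq F^-\succeq \hat{F}^{\delta,-}_{n,U}$, after which your "$\xi$ preserves the ordering" step yields the stated conclusion verbatim.
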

We show how this could be applied to provide bounds for Gini coefficient and Atkinson index by controlling the numerator and denominator separately as integrals of monotonic functions of $F^-$.

\begin{example}[Gini coefficient]
If given a $(1-\delta)$-CBP $(\hat{F}^\delta_{n,L},\hat{F}^\delta_{n,U})$ and $\hat{F}^\delta_{n,L}\succeq 0$ \footnote{This can be easily achieved by taking truncation over $0$. Also, the construction of $\hat{F}^\delta_{n,L}$ in Section~\ref{app:two-side} always satisies this constraint.}, we can provide the following bound for the Gini coefficient. Notice that 
\begin{align*}
\cG(X)=\frac{\int_0^1 (2p-1) F^-(p)dp}{\int_0^1 F^-(p)dp}=\frac{\int_0^1 2p F^-(p)dp}{\int_0^1 F^-(p)dp}-1.
\end{align*}
Given $F^-(p)\ge 0$ (since we only consider non-negative losses, i.e. $X$ is always non-negative), we know 
$$\cG(X)\le \frac{\int_0^1 2p \hat{F}^{\delta,-}_{n,L}(p)dp}{\int_0^1 \hat{F}^{\delta,-}_{n,U}(p)dp}-1, $$
with probability at least $1-\delta$.
\end{example}

\begin{example}[Atkinson index]
First, we present the complete version of Atkinson index. Namely,
\begin{equation*}
\cA (\varepsilon,X):= \begin{cases}
1-\frac{\Big(\int_{0}^1 (F^-(p))^{1-\varepsilon}dp\Big)^{\frac{1}{1-\varepsilon}}}{\int_0^1 F^-(p)dp}, \quad \text{if}~~\varepsilon\ge 0,~ \varepsilon\neq 1; \\
1-\frac{\exp(\int_{0}^1 \ln(F^-(p))dp)}{\int_0^1 F^-(p)dp}, \quad \text{if}~~\varepsilon=1.
\end{cases}
\end{equation*}
Notice that for $\varepsilon\ge 0$, $(\cdot)^{1-\varepsilon}$ and $\ln(\cdot)$ are increasing functions, thus, for Atkinson index and a $(1-\delta)$-CBP $(\hat{F}^\delta_{n,L},\hat{F}^\delta_{n,U})$, if $\hat{F}^\delta_{n,L}\succeq 0$, let us define $\cA^\delta_U (\varepsilon,X):=
1-\frac{\Big(\int_{0}^1 (\hat{F}^{\delta,-}_{n,U}(p))^{1-\varepsilon}dp\Big)^{\frac{1}{1-\varepsilon}}}{\int_0^1 \hat{F}^{\delta,-}_{n,L}(p)dp}, ~\text{if}~\varepsilon\ge 0, \varepsilon\neq 1$; $1-\frac{\exp(\int_{0}^1 \ln(\hat{F}^{\delta,-}_{n,U}(p))dp)}{\int_0^1 \hat{F}^{\delta,-}_{n,L}(p)dp}, ~\text{if}~\varepsilon=1.$ Then, with probability at least $1-\delta$, $\cA^\delta_U (\varepsilon,X)$ is an upper bound for $\cA (\varepsilon,X)$ for all $\varepsilon\in[0,1)$.
\end{example}
As mentioned in Remark~\ref{rm:1}, instead of calculating bounds separately for each $\varepsilon$, simple post-processing enables us to efficiently issue a family of bounds.

\begin{example}[CVaR fairness-risk measures and beyond]
Recall that for $\alpha\in(0,1)$, $$\cD_{CV,\alpha}(T(F_g))=\min_{\rho\in\bR}\left\{\rho+\frac{1}{1-\alpha}\cdot\bE_{g\sim \cP_{\text{Idx}}}[T(F_g)-\rho]_+ \right\}-\bE_{g\sim \cP_{\text{Idx}}}[T(F_g)].$$
The function $[T(F_g)-\rho]_+$ is an increasing function when $\rho$ is fixed and its further composition with the expectation operation is still increasing. If we have $(T^\delta_L(F_g), T^\delta_U(F_g))$ such that $T^\delta_L(F_g)\le T(F_g)\le T^\delta_U(F_g)$ \footnote{$T(F_g)$ here is one of the functionals in the form we studied, so that we can provide upper and lower bounds for it.} for all $g$ with probability at least $1-\delta$, then we have 
$$\cD_{CV,\alpha}(T(F_g))\le\min_{\rho\in\bR}\left\{\rho+\frac{1}{1-\alpha}\cdot\bE_{g\sim \cP_{\text{Idx}}}[T^\delta_U(F_g)-\rho]_+ \right\}-\bE_{g\sim \cP_{\text{Idx}}}[T^\delta_L(F_g)],$$
and the first term of RHS can be minimized easily since it is a convex function of $\rho$.
\end{example}

\subsubsection{Control for absolute and polynomial functions} \label{app:abs}
Recall that  if $s_L\le s\le s_U$, then 
$$s_L\bm{1}\{s_L\ge 0\}-s_U\bm{1}\{s_U\le 0\}\le |s|\le \max\{|s_U|,|s_L|\}.$$ 
More generally, for any polynomial function $\phi(s)=\sum_{k=0}\alpha_k s^k$. Notice if $k$ is odd, $s^k$ is monotonic w.r.t. $s$ and we can bound 
\begin{align*}
\phi(s)&\le\sum_{\{k~\text{is odd},~\alpha_k\ge 0\}}\alpha_k s_U^k+ \sum_{\{k~\text{is odd},~\alpha_k< 0\}}\alpha_k s_L^k\\
&+\sum_{\{k~\text{is even},~\alpha_k\ge 0\}}\alpha_k \max\{|s_L|^k,|s_U|^k\}+\sum_{\{k~\text{is even},~\alpha_k<0\}}\alpha_k (s_L\bm{1}\{s_L\ge 0\}-s_U\bm{1}\{s_U\le 0\})^k.
\end{align*}

So, for $\phi(F^-)$, we can plug in $\hat{F}^{\delta,-}_{n,L}$ and $\hat{F}^{\delta,-}_{n,U}$ to replace $s_U$ and $s_L$ to obtain an upper bound with probability at least $(1-\delta)$. The derivation for the lower bound is similar. We summarize our results as the following proposition.

\begin{proposition}
If given a $(1-\delta)$-CBP, then with probability at least $1-\delta$, $(\hat{F}^\delta_{n,L},\hat{F}^\delta_{n,U})$, 
$$\hat{F}^{\delta,-}_{n,U}\bm{1}\{\hat{F}^{\delta,-}_{n,U}\ge 0\}-\hat{F}^{\delta,-}_{n,L}\bm{1}\{\hat{F}^{\delta,-}_{n,L}\le 0\} \preceq|F^-|\preceq \max\{|\hat{F}^{\delta,-}_{n,L}|,|\hat{F}^{\delta,-}_{n,}|\}.$$
Moreover, for any polynomial function $\phi(s)=\sum_{k=0}\alpha_k s^k$, we have 
\begin{align*}
\phi(F^-)&\preceq\sum_{\{k~\text{is odd},~\alpha_k\ge 0\}}\alpha_k (\hat{F}^{\delta,-}_{n,L})^k+ \sum_{\{k~\text{is odd},~\alpha_k< 0\}}\alpha_k (\hat{F}^{\delta,-}_{n,U})^k\\
&+\sum_{\{k~\text{is even},~\alpha_k\ge 0\}}\alpha_k \max\{|\hat{F}^{\delta,-}_{n,U}|^k,|\hat{F}^{\delta,-}_{n,L}|^k\}\\
&+\sum_{\{k~\text{is even},~\alpha_k<0\}}\alpha_k (\hat{F}^{\delta,-}_{n,U}\bm{1}\{\hat{F}^{\delta,-}_{n,U}\ge 0\}-\hat{F}^{\delta,-}_{n,L}\bm{1}\{\hat{F}^{\delta,-}_{n,L}\le 0\} )^k.
\end{align*}
\end{proposition}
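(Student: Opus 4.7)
The plan is to reduce the proposition to pointwise real-analysis facts about a scalar $s$ lying in an interval $[s_L,s_U]$, and then lift them back to the functional statement via Proposition~\ref{prop:flip}. Concretely, I would first apply Proposition~\ref{prop:flip} to the given $(1-\delta)$-CBP to obtain the inverse-CDF envelope $\hat{F}^{\delta,-}_{n,U} \preceq F^- \preceq \hat{F}^{\delta,-}_{n,L}$ with probability at least $1-\delta$. All subsequent inequalities then become deterministic on this event. Fixing a quantile $p$ and writing $s_L := \hat{F}^{\delta,-}_{n,U}(p)$, $s_U := \hat{F}^{\delta,-}_{n,L}(p)$, $s := F^-(p)$, the task reduces to producing sharp upper and lower bounds for $|s|$ and for $\phi(s)$ when $s \in [s_L,s_U]$.

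For the absolute-value bounds, I would proceed by a three-case split on the sign of the interval. If $s_L \ge 0$, the interval lies in the non-negative half-line and $|s|$ ranges over $[s_L,s_U]$; if $s_U \le 0$, it lies in the non-positive half-line and $|s|$ ranges over $[-s_U,-s_L]$; otherwise $s_L \le 0 \le s_U$ and $|s|$ can be as small as $0$. The upper bound $|s| \le \max\{|s_L|,|s_U|\}$ holds uniformly in all three cases. The three lower bounds assemble into the single closed-form expression $s_L \bm{1}\{s_L \ge 0\} - s_U \bm{1}\{s_U \le 0\}$: the two indicators are never simultaneously active, and both vanish in the third case, matching the trivial lower bound of $0$.

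For the polynomial $\phi(s)=\sum_k \alpha_k s^k$, I would split the sum by parity of $k$ and sign of $\alpha_k$ and bound each of the four groups separately. For odd $k$, the map $s \mapsto s^k$ is monotonically increasing on $\mathbb{R}$, so $\alpha_k s^k \le \alpha_k s_U^k$ when $\alpha_k \ge 0$ and $\alpha_k s^k \le \alpha_k s_L^k$ when $\alpha_k < 0$. For even $k$, $s^k = |s|^k$, so the preceding absolute-value bounds raised to the $k$th power yield $\alpha_k s^k \le \alpha_k \max\{|s_L|,|s_U|\}^k$ when $\alpha_k \ge 0$, and $\alpha_k s^k \le \alpha_k (s_L \bm{1}\{s_L \ge 0\} - s_U \bm{1}\{s_U \le 0\})^k$ when $\alpha_k < 0$ (since a negative coefficient is maximized by minimizing $|s|^k$). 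Summing the four groups and substituting $s_L = \hat{F}^{\delta,-}_{n,U}$ and $s_U = \hat{F}^{\delta,-}_{n,L}$ recovers the displayed expression.

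The only genuine subtlety, rather than an obstacle, is the role reversal of the ``$L$'' and ``$U$'' subscripts in passing from $F$ to $F^-$ through Proposition~\ref{prop:flip}: the lower CDF bound becomes the upper inverse-CDF bound, and vice versa. Once this bookkeeping is fixed, the rest is mechanical case checking that closely mirrors the informal derivation already given in the body of Section~\ref{app:abs} preceding the proposition.
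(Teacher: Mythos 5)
Your proof is correct and follows essentially the same route as the paper: the paper likewise establishes the scalar inequalities for $|s|$ and for each parity/sign group of $\phi(s)=\sum_k \alpha_k s^k$ on an interval $s\in[s_L,s_U]$, and then substitutes $s_L=\hat{F}^{\delta,-}_{n,U}$, $s_U=\hat{F}^{\delta,-}_{n,L}$ on the event where the CBP holds, using Proposition~\ref{prop:flip} for the inversion of the bounds. Your explicit three-case check for the absolute-value lower bound and your remark on the $L$/$U$ role reversal are in fact slightly more careful than the paper's own exposition.
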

\begin{example}
If we have $(T^\delta_L(F_g), T^\delta_U(F_g))$ such that $T^\delta_L(F_g)\le T(F_g)\le T^\delta_U(F_g)$ holds for all $g$ we consider, then we can provide high probability upper bounds for 
$$\xi(T(F_{g_1})-T(F_{g_2}))$$
for any polynomial functions or the absolute function $\xi$. For example, with probability at least $1-\delta$,
$$|T(F_{g_1})-T(F_{g_2})|\le \max\{|T^\delta_U(F_{g_1})-T^\delta_L(F_{g_2})|,|T^\delta_L(F_{g_1})-T^\delta_U(F_{g_2})|\}.$$
\end{example}

We will further show in Appendix~\ref{sec:othermeasures} how our results are applied to specific examples.

\subsubsection{Control for a general function}
To handle general non-linearity, we need to introduce the class of functions of bounded variation on a certain interval, which is a very rich class that includes all the functions that are continuously differentiable or Lipchitz continuous on that interval.
\begin{definition}[Functions of bounded total variation \citep{royden1988real}]
Define the set of paritions on $[a,b]$ as
 $$\Pi=\{\pi=(x_0,x_1,\cdots,x_{n_\pi})~|~\pi~\text{is a partition of}~[a,b]~\text{satisfying}~x_i\le x_{i+1}~\text{for all}~0\le i\le n_\pi-1\}.$$
 Then, the total variation of a continuous real-valued function $\xi$, defined on $[a,b]\subset \bR$ is defined as $$V_a^b(\xi):=\sup_{\pi\in\Pi}\sum_{i=0}^{n_\pi}|\xi(x_{i+1})-\xi(x_{i})|$$
where $\Pi$ is the set of all partitions, and we say a function $\xi$ is of bounded variation, i.e. $\xi\in \text{BV}([a,b])$ iff $V_a^b(\xi)<\infty$.
\end{definition}
Recall that $X\ge 0$ in our cases, then, for $\xi(F^-)$, we can have the following bound.
\begin{theorem}[A restatement \& formal version of Theorem~\ref{thm:1}]
For a $(1-\delta)$-CBP $(\hat{F}^\delta_{n,L},\hat{F}^\delta_{n,U})$, for any $p\in[0,1]$ such that the total variation of $\xi$ is finite on $[0,\hat{F}^{\delta,-}_{n,L}(p)]$, then 
$$\xi(F^-(p))\le V_{0}^{\hat{F}^{\delta,-}_{n,L}(p)}(\xi)-V_{0}^{\hat{F}^{\delta,-}_{n,U}(p)}(\xi)+\xi(\hat{F}^{\delta,-}_{n,U}(p)).$$
Moreover, if $\xi$ is continuously differentiable on $[0,\hat{F}^{\delta,-}_{n,L}(p)]$, we can express $V_{0}^{s}(\xi)$ as $\int_0^x|\frac{d\xi}{ds}(s)|ds$ for any $x\in [0,\hat{F}^{\delta,-}_{n,L}(p)]$.
\end{theorem}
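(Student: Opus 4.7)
The plan is to reduce the BV case to the monotonic-function case already handled in Section~\ref{sec:nonlinear} via the classical Jordan decomposition. On the $(1-\delta)$-event $\{\hat{F}^{\delta}_{n,U} \succeq F \succeq \hat{F}^{\delta}_{n,L}\}$, Proposition~\ref{prop:flip} gives $\hat{F}^{\delta,-}_{n,U}(p) \le F^-(p) \le \hat{F}^{\delta,-}_{n,L}(p)$. Since $\xi$ has finite total variation on $[0,\hat{F}^{\delta,-}_{n,L}(p)]$, I would define the total-variation function $V(x) := V_0^{x}(\xi)$ and write $\xi(x) = f_1(x) - f_2(x)$ with $f_1(x) := V(x)$ and $f_2(x) := V(x) - \xi(x)$.

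The first step is then to verify that both $f_1$ and $f_2$ are non-decreasing on $[0,\hat{F}^{\delta,-}_{n,L}(p)]$. Monotonicity of $f_1$ is immediate from the definition of total variation. For $f_2$, the standard estimate $V(x_2)-V(x_1) \ge |\xi(x_2)-\xi(x_1)| \ge \xi(x_2)-\xi(x_1)$ (which is visible already from the two-point partition $\{x_1,x_2\}$ appearing in any refinement of a partition of $[0,x_2]$) gives $(V-\xi)(x_2) \ge (V-\xi)(x_1)$ for $x_1 < x_2$ in the interval. With this in hand, I would apply the monotonic-function control argument from Section~\ref{sec:nonlinear} to each piece separately, noting that the increasing piece $f_1$ gets paired with the upper bound $\hat{F}^{\delta,-}_{n,L}$ on the inverse CDF, while $-f_2$ gets paired with the lower bound $\hat{F}^{\delta,-}_{n,U}$:
\begin{align*}
\xi(F^-(p)) &= f_1(F^-(p)) - f_2(F^-(p)) \\
&\le f_1(\hat{F}^{\delta,-}_{n,L}(p)) - f_2(\hat{F}^{\delta,-}_{n,U}(p)) \\
&= V_0^{\hat{F}^{\delta,-}_{n,L}(p)}(\xi) - V_0^{\hat{F}^{\delta,-}_{n,U}(p)}(\xi) + \xi(\hat{F}^{\delta,-}_{n,U}(p)),
\end{align*}
which is precisely the stated inequality.

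For the $C^1$ addendum, I would invoke the textbook identity $V_0^x(\xi) = \int_0^x |\xi'(s)|\,ds$: the $\le$ direction follows from the mean value theorem applied to each sub-interval of a partition, and the $\ge$ direction from Riemann-sum approximation of the integral (see, e.g., \citet{royden1988real}). The main (minor) obstacle is not analytic but bookkeeping: because we are in a genuinely two-sided regime, the asymmetric "$V_L - V_U + \xi(\hat{F}^{\delta,-}_{n,U})$" structure only appears if one is careful to use the two opposite CDF inverse bounds on the two components of the Jordan decomposition — mixing them up would yield a strictly looser (or even incorrect) bound.
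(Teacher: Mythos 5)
Your proposal is correct and follows essentially the same route as the paper's proof: the Jordan decomposition $\xi = V_0^{\cdot}(\xi) - (V_0^{\cdot}(\xi) - \xi)$ into two increasing pieces, followed by pairing each monotone component with the appropriate side of the inverse-CDF sandwich $\hat{F}^{\delta,-}_{n,U}(p) \le F^-(p) \le \hat{F}^{\delta,-}_{n,L}(p)$. The only difference is that you spell out the monotonicity check for $V - \xi$ that the paper delegates to a citation of \citet{royden1988real}.
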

\begin{proof}
By the property of functions of bounded total variation \citep{royden1988real}, if $\xi$ is of bounded total variation on $[0,\hat{F}^{\delta,-}_{n,L}(p)]$, then, we have that: for any $x\in [0,\hat{F}^{\delta,-}_{n,L}(p)]$
$$\xi(x)= V^x_0(\xi) - (V^x_0(\xi)-\xi(x))$$
where both $f_1(x):= V^x_0(\xi)$ and $f_2(x):=V^x_0(\xi)-\xi(x)$ are increasing functions. Moreover, 
$$V^x_0(\xi)=\int_0^x\Big|\frac{d\xi}{ds}(s)\Big|ds$$
if $\xi$ is continuously differentiable.

Thus, by taking advantage of the monotonicity, we have 
$$\xi(F^-(p))\le V_{0}^{\hat{F}^{\delta,-}_{n,L}(p)}(\xi)-V_{0}^{\hat{F}^{\delta,-}_{n,U}(p)}(\xi)+\xi(\hat{F}^{\delta,-}_{n,U}(p)).$$

So, if $\xi$ is of bounded variation on the range of $X$, then 
$$\xi(F^-)\preceq V_{0}^{\hat{F}^{\delta,-}_{n,L}}(\xi)-V_{0}^{\hat{F}^{\delta,-}_{n,U}}(\xi)+\xi(\hat{F}^{\delta,-}_{n,U})= f_1(\hat{F}^{\delta,-}_{n,L})-f_2(\hat{F}^{\delta,-}_{n,U}).$$
\end{proof}


\subsection{
Methods to obtain confidence two-sided bounds for CDFs (Section~\ref{sec:two-side})}\label{app:two-side}
We provide details for two-sided bounds and our numerical methods in the following.
\subsubsection{
The reduction approach to constructing upper bounds of CDFs (Section~\ref{sec:reduction})}\label{app:reduction}
We here provide the proof of Lemma~\ref{lemma:two-sided}.

\begin{lemma}[A restatement \& formal version of Lemma~\ref{lemma:two-sided}]
For $0\le L_1\le L_2\cdots\le L_n\le 1$, since $\bP(\forall i: F(X_{(i)}) \ge L_i)\ge \bP(\forall i: U_{(i)}\ge L_{i})$ by \citet{snell2022quantile}, if we further have  $\bP(\forall i: U_{(i)}\ge L_{i})\ge 1-\delta,$
then we have 
$$\bP(\forall i: \lim_{\epsilon\rightarrow 0^+}F(X_{(i)}-\epsilon) \le 1-L_{n-i+1}) \ge 1-\delta.$$
Furthermore, let $R(x)$ be defined as 
$$R(x) =\begin{cases}
1-L_n, &\text{for } x < X_{(1)} \\
1-L_{n-1}, &\text{for } X_{(1)} \le x < X_{(2)} \\
\ldots \\
1-L_1, &\text{for } X_{(n-1)} \le x < X_{(n)} \\
1, &\text{for } X_{(n)} \le x.
\end{cases}$$ Then, $F\preceq R$.
\end{lemma}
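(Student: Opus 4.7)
The plan is to prove the two claims by combining a symmetry/reflection argument for uniform order statistics with the standard probability integral transform coupling, and then to obtain $F \preceq R$ by a direct case analysis on $x$ on the resulting high-probability event.

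First I would translate the hypothesis into an upper-bound statement on the uniform order statistics. Setting $V_i := 1 - U_i$, the $V_i$ are again iid $\mathcal{U}(0,1)$, so the joint law satisfies $(V_{(1)},\ldots,V_{(n)}) \stackrel{d}{=} (U_{(1)},\ldots,U_{(n)})$, while the pathwise identity $V_{(i)} = 1 - U_{(n-i+1)}$ always holds. Substituting into the hypothesis $\mathbb{P}(\forall i : U_{(i)} \ge L_i) \ge 1-\delta$ and reindexing $j := n - i + 1$ yields
\[
  \mathbb{P}\bigl(\forall j : U_{(j)} \le 1 - L_{n-j+1}\bigr) \ge 1 - \delta.
\]

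Next I would use the standard probability integral transform coupling: for each $i$, define $U_i := F(X_i^-) + W_i\bigl(F(X_i) - F(X_i^-)\bigr)$ with $W_i \sim \mathcal{U}(0,1)$ independent of everything else, where $F(x^-) := \lim_{\epsilon \to 0^+} F(x-\epsilon)$. A direct calculation shows that the $U_i$ are iid $\mathcal{U}(0,1)$ and $F(X_i^-) \le U_i$ almost surely. Since $F(\cdot^-)$ is monotone nondecreasing, the ranks of $\{F(X_i^-)\}$ match those of $\{X_i\}$, so the inequality survives after sorting: $F(X_{(i)}^-) \le U_{(i)}$ a.s.\ for every $i$. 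Combined with the previous display this yields the first assertion,
\[
  \mathbb{P}\bigl(\forall i : \lim_{\epsilon \to 0^+} F(X_{(i)} - \epsilon) \le 1 - L_{n-i+1}\bigr) \ge 1 - \delta.
\]

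For the \emph{furthermore} part, I would work on the event where the previous display holds and verify $F(x) \le R(x)$ pointwise by cases, using only monotonicity of $F$ and the bound $F(X_{(i)}^-) \le 1 - L_{n-i+1}$: (i) if $x < X_{(1)}$, then $F(x) \le F(X_{(1)}^-) \le 1 - L_n = R(x)$; (ii) if $X_{(i)} \le x < X_{(i+1)}$ for some $i \in \{1,\ldots,n-1\}$, then $F(x) \le F(X_{(i+1)}^-) \le 1 - L_{n-i} = R(x)$; (iii) if $x \ge X_{(n)}$, then $R(x) = 1 \ge F(x)$ trivially. The main subtlety is essentially bookkeeping: making the reindexing in the symmetry step line up cleanly and checking that the pointwise coupling $F(X_i^-) \le U_i$ passes through the sort because $F(\cdot^-)$ is monotone. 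Once those two points are nailed down, every remaining step is mechanical.
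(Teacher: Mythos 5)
Your proof is correct, and it takes a recognizably different technical route to the same reflection idea. The paper reflects the \emph{data}: it sets $B=-X$, rewrites the event $\{\lim_{\epsilon\to 0^+}F(X_{(i)}-\epsilon)\le 1-L_{n-i+1}\}$ as $\{F_B(B_{(n-i+1)})> L_{n-i+1}\}$, and then uses the distributional identity $B_{(j)}\stackrel{d}{=}F_B^{-}(U_{(j)})$ together with $F_B\circ F_B^{-}(u)\ge u$ to reduce to $\bP(\forall i:\,U_{(i)}> L_i)$. You instead reflect the \emph{uniforms} ($U\mapsto 1-U$) to turn the hypothesis into $\bP(\forall j:\,U_{(j)}\le 1-L_{n-j+1})\ge 1-\delta$, and then couple via the randomized probability integral transform so that $F(X_i^-)\le U_i$ holds almost surely, the inequality surviving sorting. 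The two devices are dual, but yours buys an almost-sure pathwise domination where the paper has only a distributional one; this avoids introducing the auxiliary CDF $F_B$ and sidesteps the strict-versus-weak inequality bookkeeping in the paper's chain of equalities. The price is two standard facts you assert rather than prove — that the distributional transform yields genuine iid uniforms, and that coordinatewise inequalities are preserved by sorting (e.g.\ via $a_{(i)}=\min_{|S|=i}\max_{j\in S}a_j$) — each of which deserves a line in a full write-up. The concluding case analysis for $F\preceq R$ matches the paper's conservative-completion step, and your indexing ($R(x)=1-L_{n-i}$ on $[X_{(i)},X_{(i+1)})$) agrees with the formal restatement.
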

\begin{proof}
Notice that for given order statistics $\{X_{(i)}\}_{i=1}^n$, let $\bP_{\{X_{(i)}\}_{i=1}^n}$ denote the probability taken over the randomness of $\{X_{(i)}\}_{i=1}^n$, and $\bP_X$ denote the probability taken over the randomness of $X$, which is an independent random variable drawn from $F$. Let us denote $B=-X$, and $B_{(i)}$ as the $i$-th order statistic for samples $\{-X_{i}\}_{i=1}^n$. It is easy to see that $B_{(n-i+1)}=-X_{(i)}$. We also denote $\bP_B$ as the probability taken over the randomness of $B$, and $F_B$ as the CDF of $B$.

\begin{align*}
\bP_{\{X_{(i)}\}_{i=1}^n}(\forall i: \lim_{\epsilon\rightarrow 0^+}F(X_{(i)}-\epsilon) \le 1-L_{n-i+1})
=& \bP_{\{X_{(i)}\}_{i=1}^n}(\forall i: \bP_X(X\ge X_{(i)} ) > L_{n-i+1})\\
=&\bP_{\{X_{(i)}\}_{i=1}^n}(\forall i: \bP_X(-X\le -X_{(i)} ) > L_{n-i+1})\\
= & \bP_{\{X_{(i)}\}_{i=1}^n}(\forall i: \bP_B(B\le B_{(n-i+1)} ) > L_{n-i+1})\\
=& \bP(\forall i: F_B\circ F_B^{-}(U_{(n-i+1))}> L_{n-i+1})\\
\ge& \bP(\forall i: U_{(n-i+1)}> L_{n-i+1}).
\end{align*}
where we use the fact that $F_B^-(U_{(n-i+1)})$ is of the same distribution as $B_{(n-i+1)}$ and the last inequality follows from Proposition 1, eq. 24 on p.5 of \citet{shorack_empirical_2009}.

Notice that $\bP(\forall i: U_{(n-i+1)}> L_{n-i+1})=\bP(\forall i: U_{(n-i+1)}\ge L_{n-i+1}),$ and according to \citet{snell2022quantile} and our assumption,
$\bP(\forall i: F(X_{(i)}) \ge L_i)\ge \bP(\forall i: U_{(i)}\ge L_{i})\ge 1-\delta$.

The conservative construction of $R$ satisfies $R\succeq F$ straightforwardly if $\forall i: \lim_{\epsilon\rightarrow 0^+}F(X_{(i)}-\epsilon) \le 1-L_{n-i+1})$ holds. Thus, we know  $R\succeq F$ with probability at least $1-\delta$. Our proof is complete.
\end{proof}

\begin{figure}[!ht]
\centering
    \includegraphics[width=\textwidth]{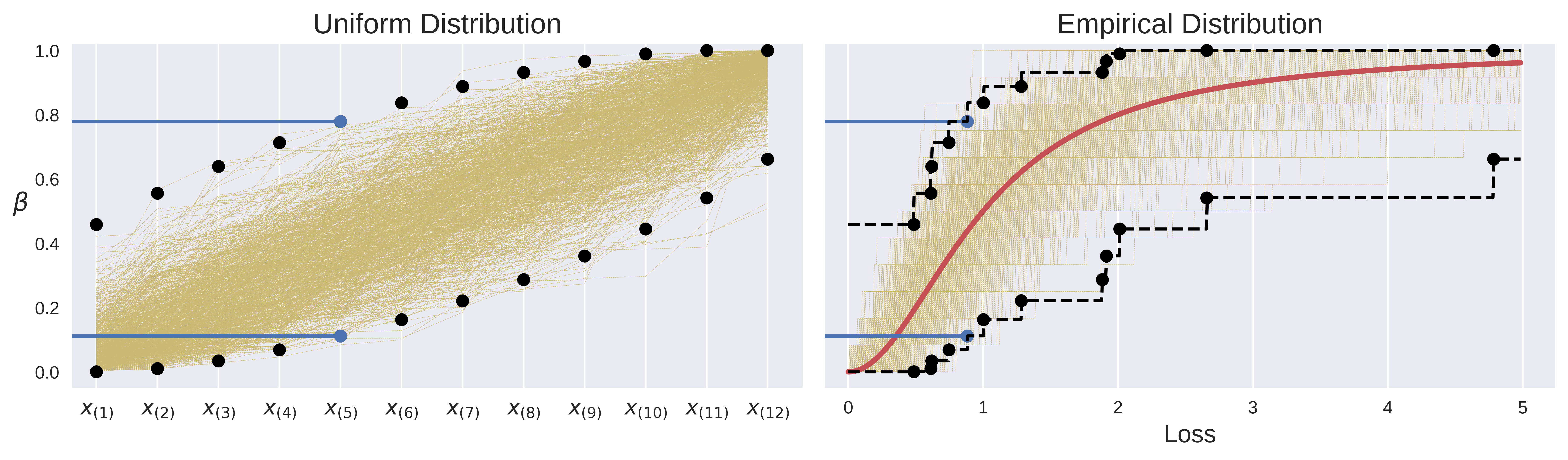}
    \caption{Example illustrating the construction of distribution-free CDF lower and upper bounds by bounding order statistics.  On the left, order statistics are drawn from a uniform distribution.  On the right, samples are drawn from a real loss distribution, and the corresponding Berk-Jones CDF lower and upper bound are shown in black.  Our distribution-free method gives bound $b^{(l)}_i$ and $b^{(u)}_i$ on each sorted order statistic such that the bound depends only on $i$, as illustrated in the plots for $i=5$ (shown in blue). On the left, 1000 realizations of $x_{(1)}, \ldots, x_{(n)}$ are shown in yellow.  On the right, 1000 empirical CDFs are shown in yellow, and the true CDF $F$ is shown in red.}
    \label{fig:synthetic}
\end{figure}

\subsubsection{Details of numerical optimization method (Section~\ref{sec:num})}\label{app:num}
Now, we introduce the details of our numerical optimization method. Recall that one drawback of the QBRM bounding approach is that it is not weight function aware:  when controlling $\int_0^1\psi(p)F^-(p)dp$ for a non-negative weight function $\psi$, the procedure ignores the structure of $\psi$, as it first obtains $\hat{F}^\delta_{n,L}$, then provides an upper bound $\int_0^1\psi(p)\hat{F}^{\delta,-}_{n,L}(p)dp$.  

Our numerical approach can overcome that drawback and can also easily be applied to handle mixtures of multiple functionals. The bounds obtained by our method are significantly tighter than those provided by methods in \citep{snell2022quantile} in the regime of small data size. Notice that the small data size regime is the one people care about because when the data size is large, all the bounds we discussed will converge to the same value, and the gap between different bounds will shrink to $0$ as the data size grows.

First, by \citet{moscovich2016exact} and Proposition 1, eq. 24 on p.5 of \citet{shorack_empirical_2009}, we have for any $0\le L_1\le\cdots\le L_n\le 1$,
\begin{align*}
\bP\big(\forall i,~ F(X_{(i)})\ge L_i\big)&\ge \bP\big(\forall i,~ U_{(i)}\ge L_i\big)\\
&\ge n!\int^1_{L_n}dx_n\int^{x_{n}}_{L_{n-1}}dx_{n-1}\cdots\int^{x_{2}}_{L_{1}}dx_{1},
\end{align*}
where the right-hand side integral is a function of $\{L_i\}_{i=1}^n$ and its partial derivatives can be exactly calculated by the package in \citet{moscovich2016exact}. Specifically, the package in \citet{moscovich2016exact} enables us to calculate 

$$\upsilon(L_1,L_2,\cdots, L_n,1):= \int^1_{L_n}dx_n\int^{x_{n}}_{L_{n-1}}dx_{n-1}\cdots\int^{x_{2}}_{L_{1}}dx_{1}$$
for any positive integer $n$. Notice that the partial derivative of $\upsilon(L_1,L_2,\cdots, L_n,1)$ with respect to $L_i$ is:
\begin{align*}
\partial_{L_i}\upsilon(L_1,L_2,\cdots, L_n,1)=&-\int^1_{L_n}dx_n\int^{x_n}_{L_{n-1}}dx_{n-1}\cdots\int^{x_{i+2}}_{L_{i+1}}dx_{i+1}\\
&\cdot \int^{L_i}_{L_{i-1}}dx_{i-1}\cdots\int^{x_{2}}_{L_{1}}dx_{1},\\
&= -\upsilon(L_{i+1},\cdots,L_n,1)\cdot\upsilon(L_{1},\cdots,L_{i-1},L_i),
\end{align*}
which we can also use the package in \citep{moscovich2016exact} to calculate the partial derivatives.

Consider providing upper or lower bounds for $\int_0^1\psi(p)F^-(p)dp$ for non-negative weight function $\psi$ as an example. For any $\{L_i\}_{i=1}^n$ satisfying $\bP\big(\forall i,~ F(X_{(i)})\ge L_i\big)\ge 1-\delta$, one can use conservative CDF completion in \citet{snell2022quantile} to obtain $\hat{F}^{\delta}_{n,L}$, i.e. 
$\int_0^1\psi(p)\xi(\hat{F}^{\delta,-}_{n,L}(p))dp= \sum_{i=1}^{n+1}\xi(X_{(i)})\int_{L_{i-1}}^{L_{i}}\psi(p)dp$, where $L_{n+1}$ is $1$, $L_0=0$, and $X_{(n+1)}=\infty$ or a known upper bound for $X$. Then, we aim to obtain a tight upper bound:
\vspace{0.1in}
$$\sum_{i=1}^{n+1}\xi(X_{(i)})\int_{L_{i-1}}^{L_{i}}\psi(p)dp$$
such that
$$\bP\big(\forall i,~ F(X_{(i)})\ge L_i\big)\ge 1-\delta,~ \text{and}~0\le L_1\le \cdots\le L_n\le 1.$$

Similarly, for the lower bound, we can use the CDF completion mentioned in Theorem~\ref{thm:1}, and construct $\hat{F}^{\delta}_{n,U}$, then, we can study the following lower bound for  $\int_0^1\psi(p)F^-(p)dp$,
$$\sum_{i=1}^{n}\xi(X_{(i)})\int_{L_{n-i}}^{L_{n-i+1}}\psi(p)dp$$
where $X_{(0)}=0$.

\paragraph{Data splitting.} In practice, to avoid over-fitting, we need to do data-splitting. For simplicity, let us consider $n=2m$. We use the first $m$ samples to do the following optimization:

$$\min_{\{L_i\}_{i=1}^m}\sum_{i=1}^{m}\xi(X_{(i)})\int_{L_{i-1}}^{L_{i}}\psi(p)dp+ \xi(X_{(*)})\int_{L_{m}}^{L_{m+1}}\psi(p)dp$$
such that
$$\bP\big(\forall i,~ F(X_{(i)})\ge L_i\big)\ge 1-\delta,~ \text{and}~0\le L_1\le \cdots\le L_m\le 1,$$
where $L_{m+1}$ is $1$, $L_0=0$, and $X_{(*)}=\infty$ or a known upper bound for $X$.

\paragraph{Parameterized model approach.} Notice the above optimization problem formulation has a drawback: if more samples are drawn, i.e. $m$ increases, then the number of parameters we need to optimize also increases. In practice, we re-parameterize $\{L_i\}_{i=1}^m$ as the following:
$$L_i(\theta)=\frac{\sum_{j=1}^i\exp(\phi_\theta(g_j))}{1+\sum_{j=1}^m\exp(\phi_\theta(g_j))}$$
where $g_i$ are random Gaussian seeds. This is of the same spirit as using random seeds in generative models. We find that a simple parameterized neural network model with 3 fully-connected hidden layers of dimension 64 is enough for good performance and robust to hyper-parameter settings. Take the upper bound optimization problem as an example; using the new parameterized model, we have
\begin{equation}\label{eq:optimize}
\min_{\theta}\sum_{i=1}^{m+1}\xi(X_{(i)})\int_{L_{i-1}(\theta)}^{L_{i}(\theta)}\psi(p)dp
\end{equation}
such that
$$m!\int^1_{L_m(\theta)}dx_n\int^{x_{m}}_{L_{m-1}(\theta)}dx_{m-1}\cdots\int^{x_{2}}_{L_{1}(\theta)}dx_{1}\ge 1-\delta,$$
where $L_0=0$, $L_{m+1}=1$, $X_{(m+1)}=\infty$ or a known upper bound for $X$. We can solve the above optimization problem using heuristic methods such as \citep{gong2021bi}. 

\paragraph{Post-processing for a rigorous guarantee for constraints.} Notice that we may not ensure the constraint $m!\int^1_{L_m(\theta)}dx_n\int^{x_{m}}_{L_{m-1}(\theta)}dx_{m-1}\cdots\int^{x_{2}}_{L_{1}(\theta)}dx_{1}\ge 1-\delta$ is satisfied in the above optimization because we may use surrogates like Langrange forms in our optimization processes. To make sure the constraint is strictly satisfied, we can do the following post-processing: let us denote the obtained $L_i$'s by optimizing \eqref{eq:optimize} as  $L_i(\hat{\theta})$. Then, we look for $\gamma^*\in[0,L_m(\hat\theta)]$ such that 

$$\gamma^*=\inf \{\gamma:m!\upsilon(L_1(\hat\theta)-\gamma,\cdots, L_m(\hat\theta)-\gamma,1)\ge 1-\delta,\gamma\ge 0\}.$$
Notice there is always a feasible solution as when $\gamma=L_m(\hat\theta)$, 

$$m!\upsilon(L_1(\hat\theta)-\gamma,\cdots, L_m(\hat\theta)-\gamma,1)\ge\bP\big(\forall i,~ U_{(i)}\ge 0\big) = 1$$
and $\upsilon(L_1(\hat\theta)-\gamma,\cdots, L_m(\hat\theta)-\gamma,1)$ is a decreasing function of $\gamma$. We can use binary search to efficiently find (a good approximate of) $\gamma^*$.

\paragraph{The final bound.} We construct the final bound by using the remaining samples $\{X_{(i)}\}_{i=m+1}^{n}$, i.e.,
$$\sum_{i=m+1}^{n+1}\xi(X_{(i)})\int_{L_{i-m-1}(\hat\theta)}^{L_{i-m}(\hat\theta)}\psi(p)dp.$$
where $L_{m+1}$ is $1$, $L_0=0$, and $X_{(n+1)}=\infty$ or a known upper bound for $X$. 

We remark here that if $n>2m$, then we can set $L_m=L_{m+1}=\cdots=L_{n-m}$ and set $L_{n+1-m}=1$.

\section{Other dispersion measures and calculation}\label{sec:othermeasures}

\subsection{Lorenz curve \& the extended Gini family}

\paragraph{Lorenz curve.} In the main context, Lorenz curve has been mentioned in reference to Gini coefficient and Atkinson index. To be more complete, we further demonstrate the definition of Lorenz curve in its mathematical form.

\begin{definition}[Lorenz curve]
The definition of Lorenz curve is a function: for $t\in[0,1]$,
\begin{equation*}
\cL(t)=\frac{\int_0^t F^{-1}(p) \, dp}{\int_0^1 F^{-1}(p) \, dp}.
\end{equation*}
\end{definition}
We can obtain a lower bound and an upper bound function for the Lorenz curve. Given a $(1-\delta)$-CBP $(\hat{F}^\delta_{n,L},\hat{F}^\delta_{n,U})$ and $\hat{F}^\delta_{n,L}\succeq 0$, we can construct a lower bound function $\cL^\delta_L(t)$:
\begin{equation*}
\cL^\delta_L(t)=\frac{\int_0^t \hat{F}^{\delta,-}_{n,U}(p) \, dp}{\int_0^1 \hat{F}^{\delta,-}_{n,L}(p) \, dp},
\end{equation*}
and an upper bound can be obtained by
\begin{equation*}
\cL^\delta_U(x)=\frac{\int_0^t \hat{F}^{\delta,-}_{n,L}(p) \, dp}{\int_0^1 \hat{F}^{\delta,-}_{n,U}(p) \, dp}.
\end{equation*}

With probability at least $1-\delta$, the true Lorenz curve sits between the upper bound function and the lower bound function for all $t\in[0,1]$.

\paragraph{The extended Gini family.} The Gini coefficient can further give rise to the extended Gini family, which is a family of variability and inequality measures that depends on one parameter -- the extended Gini parameter. The definition is as follows.

\begin{definition}[The extended Gini family\citep{yitzhaki2013gini}]
The extended Gini coefficient is given by
\begin{align*}
\cG(\nu,X):&=\frac{-\nu Cov(X,[1-F(X)]^{\nu-1})}{\bE[X]}\\
&=1-\frac{\nu\int_0^1 (1-p)^{\nu-1}F^-(p)dp}{\int_0^1 F^-(p)dp},
\end{align*}
where $\nu>0$ is the extended Gini parameter and $Cov(\cdot,\cdot)$ is the covariance. 
\end{definition}
For the extended Gini coefficient, choosing different $\nu$'s corresponds to different weighting schemes applied to the vertical distance between the egalitarian line and the Lorenz curve; and if $\nu=2$, it is the standard Gini coefficient. 

Given a $(1-\delta)$-CBP $(\hat{F}^\delta_{n,L},\hat{F}^\delta_{n,U})$ and $\hat{F}^\delta_{n,L}\succeq 0$, we can construct upper bound for $\cG$. Let 
\begin{align*}
\cG^\delta_U(\nu,X):=1-\frac{\nu\int_0^1 (1-p)^{\nu-1}\hat{F}^{\delta,-}_{n,U}(p)dp}{\int_0^1 \hat{F}^{\delta,-}_{n,L}(p)dp},
\end{align*}
then $\cG^\delta_U(\nu,X)\succeq \cG(\nu,X)$ with probability at least $1-\delta$.

\subsection{Generalized entropy index}
The generalized entropy index \citep{shorrocks1980class} is another measure of inequality in a population. Specifically, the definition is: for real number $\alpha$
\begin{equation*}
GE (\alpha,X):= \begin{cases}
\frac{1}{\alpha(\alpha-1)}\bE\left[\Big(\frac{X}{\bE X}\Big)^\alpha-1\right],~\alpha\neq 0,1\\\\
\bE\left[\frac{X}{\bE X}\ln(\frac{X}{\bE X})\right], \quad \text{if}~~\alpha=1\\\\
-\bE\left[\ln(\frac{X}{\bE X})\right],\quad \text{if}~~\alpha=0.
\end{cases}
\end{equation*}
It is not hard to further expand the expressions and write the generalized entropy index as:
\begin{equation*}
GE (\alpha,X):= \begin{cases}
\frac{1}{\alpha(\alpha-1)}\int_{0}^1\left[\Big(\frac{F^-(p)}{\int_{0}^1F^-(p)dp}\Big)^\alpha-1\right]dp,~\alpha\neq 0,1\\\\
\int_{0}^1\left[\frac{F^-(p)}{\int_{0}^1F^-(p)dp}\ln(\frac{F^-(p)}{\int_{0}^1F^-(p)dp})\right]dp, \quad \text{if}~~\alpha=1\\\\
-\int_0^1\left[\ln(\frac{F^-(p)}{\int_{0}^1F^-(p)dp})\right]dp,\quad \text{if}~~\alpha=0.
\end{cases}
\end{equation*}
Notice that $(\cdot)^\alpha$ is a monotonic function for the case $\alpha\neq 0,1$, and $\ln(\cdot)$ is also a monotonic function, so the bound can be obtained similarly as in the case of Atkinson index. For instance, for $\alpha>1$, given a $(1-\delta)$-CBP $(\hat{F}^\delta_{n,L},\hat{F}^\delta_{n,U})$,

$$\frac{1}{\alpha(\alpha-1)}\int_{0}^1\left[\Big(\frac{F^-(p)}{\int_{0}^1F^-(p)dp}\Big)^\alpha-1\right]dp\le \frac{1}{\alpha(\alpha-1)}\int_{0}^1\left[\Big(\frac{\hat{F}^{\delta,-}_{n,L}(p)}{\int_{0}^1\hat{F}^{\delta,-}_{n,U}(p)dp}\Big)^\alpha-1\right]dp.$$
Other cases can be tackled in a similar way, which we will not reiterate here.
\subsection{Hoover index}
The Hoover index \citep{kennedy1996income} is equal to the percentage of the total population's income that would have to be redistributed to make all the incomes equal.
\begin{definition}[Hoover index]
For a non-negative random variable $X$, the Hoover index is defined as 
 $$H(X)=\frac{\int_{0}^1|F^-(p)-\int_{0}^1F^-(q)dq|dp}{2\int_{0}^1F^-(p)dp}$$
\end{definition}
 Hoover index involves forms like $|F^--\mu|$ for $\mu=\int_{0}^1F^-(p)dp$. This type of nonlinear structure can be dealt with the absolute function results mentioned in Appendix~\ref{app:abs}.

For Hoover index and a $(1-\delta)$-CBP $(\hat{F}^\delta_{n,L},\hat{F}^\delta_{n,U})$, let us define 
$$H_U(X)=\frac{\int_{0}^1\max\{|\hat{F}^{\delta,-}_{n,L}(p)-\int_{0}^1\hat{F}^{\delta,-}_{n,U}(q)dq|,|\hat{F}^{\delta,-}_{n,U}(p)-\int_{0}^1\hat{F}^{\delta,-}_{n,L}(q)dq|\}dp}{2\int_{0}^1\hat{F}^{\delta,-}_{n,U}(p)dp}.$$
Then, with probability at least $1-\delta$, $H_U(,X)$ is an upper bound for $H(X)$.

\subsection{Extreme observations \& mean range}

For example, a city may need to estimate the cost of damage to public amenities due to rain in a certain month. The loss for each day of a month is $X_1,\cdots, X_k$ i.i.d drawn from $F$, and the administration hopes to estimate and control the dispersion of the losses in a month so that they can accurately allocate resources. This involves quantities such as range ($\max_{i\in[k]}X_i - \min_{j\in[k]}X_j$) or quantiles of extreme observations ($\max_{i\in[k]}X_i$). The CDF of extreme observations such as $\max_{i\in[k]}X_i$ involves a nonlinear function of $F$, i.e. $(F(x))^k$. 

\begin{example}[Quantiles of extreme observations]
The CDF of $\max_{i\in[k]}X_i$ is $F^k$. Thus, by the result of Appendix~\ref{app:abs}, if given a $(1-\delta)$-CBP $(\hat{F}^\delta_{n,L},\hat{F}^\delta_{n,U})$ and $1\succeq\hat{F}^{\delta,-}_{n,U}\succeq\hat{F}^{\delta,-}_{n,L}\succeq 0$, with probability at least $1-\delta$, 
$$(\hat{F}^{\delta,-}_{n,L})^k \preceq F^k\preceq (\hat{F}^{\delta,-}_{n,U})^k.$$
We also have 
$$(\hat{F}^{\delta,-}_{n,U})^k \preceq F^k\preceq (\hat{F}^{\delta,-}_{n,L})^k.$$
Similarly, for $\min_{i\in[k]}X_i$, the CDF is $1-(1-F)^k$, thus, we have 
$$1-(1-\hat{F}^{\delta,-}_{n,U})^k \preceq F^k\preceq 1-(1-\hat{F}^{\delta,-}_{n,L})^k.$$
We also want to emphasize, even if, $X$ is \textbf{not} necessarily non-negative, we can apply the polynomial method in Appendix~\ref{app:abs} for $\hat{F}^{\delta,-}_{n,U}$ and $\hat{F}^{\delta,-}_{n,L}$.
\end{example}
\begin{example}[Mean range]
By \citet{hartley1954universal}, if we further have prior knowledge that $X$ is of continuous distribution, the mean of $\max_{i\in[k]}X_i - \min_{j\in[k]}X_j$ can be expressed as:
$$k\int F^{-}(x)[F^{k-1}(x)-F^{k}(x)]dF(x)=k\int_{0}^1 F^{-}(F^-(p))[F^{k-1}(F^-(p))-F^{k}(F^-(p))]dp$$
Notice that both $F$ and $F^-$ are increasing. Thus, if given a $(1-\delta)$-CBP $(\hat{F}^\delta_{n,L},\hat{F}^\delta_{n,U})$, $\hat{F}^{\delta}_{n,L}\succeq 0$, then with probability at least $1-\delta$, 
$$\int_{0}^1 \hat{F}^{\delta,-}_{n,L}\big(\hat{F}^{\delta,-}_{n,L}(p)\big)\Big[(\hat{F}^{\delta}_{n,U})^k\big(\hat{F}^{\delta,-}_{n,L}(p)\big)-(\hat{F}^{\delta}_{n,L})^k\big(\hat{F}^{\delta,-}_{n,U}(p)\big)\Big]dp$$
is an upper bound of the mean range.
\end{example}

There are many other interesting societal dispersion measures that could be handled by our framework, such as those in \citep{lazovich2022measuring}. For example, they study tail share that captures ``the top 1\% of people own $X$ share of wealth", which could be easily handled with the tools provided here.  We will leave those those examples to readers.

\section{Extension to multi-dimensional cases and applications}\label{app:ext}
We briefly discuss extending our approach to multi-dimensional losses. Unfortunately, there is not a gold-standard definition of quantiles in the multi-dimensional case, and thus we only discuss functionals of CDFs and provide an example. For multi-dimensional samples
$\{\bm{X}_{i}\}_{i=1}^n$, each of $k$ dimensions, i.e. $\bm {X}_i=(X^i_1,\cdots, X^i_k)$, for any $k$-dimensional vector $\bm x=(x_1,\cdots,x_k)$, define empirical CDF 
$$\hat{F}_n(\bm x)=\frac{1}{n}\sum_{i=1}^n\bm{1}\{\bm{X}_i\preceq \bm x\}.$$
where we abuse the notation $\preceq$ to mean all of $\bm{X}_i$'s coordinates are smaller than $\bm x$'s.

By classic DKW inequality, we have with probability at least $1-\delta$,
$$|\hat{F}_n(\bm x)-F(\bm x)|\le \sqrt{\frac{\ln(k(n+1)/\delta)}{2n}}.$$

Meanwhile, we can further adopt Frechet-Hoeffeding bound, which gives,
$$\max\{1-k+\sum_{i=1}^k F_i(x_i),0\}\le F(\bm x)\le \min\{F_1(x_1),\cdots,F_k(x_k)\}$$
where $F_i$ is the CDF of the i-th coordinate. Then, we can construct $(\hat{F}^{\delta/k,i}_{n,L},\hat{F}^{\delta/k,i}_{n,U})$ such that $(\hat{F}^{\delta/k,i}_{n,L}\preceq F_i\preceq\hat{F}^{\delta/k,i}_{n,U})$, with probability at last $1-\delta/k$. Thus, by union bound,
$$\max\{1-k+\sum_{i=1}^k \hat{F}^{\delta/k,i}_{n,L}(x_i),0\}\le F(\bm x)\le \min\{ \hat{F}^{\delta/k,1}_{n,U}(x_1),\cdots, \hat{F}^{\delta/k,k}_{n,U}(x_k)\}$$
for all $\bm x$ with probability at last $1-\delta$.

We have 
\begin{align*}
&F(\bm x)\ge\max\{1-k+\sum_{i=1}^k \hat{F}^{\delta/k,i}_{n,L}(x_i),0,\hat{F}_n(\bm x)-\sqrt{\frac{\ln(k(n+1)/\delta)}{2n}}\}\\
&F(\bm x)\le \min\{ \hat{F}^{\delta/k,1}_{n,U}(x_1),\cdots, \hat{F}^{\delta/k,k}_{n,U}(x_k),\hat{F}_n(\bm x)+\sqrt{\frac{\ln(k(n+1)/\delta)}{2n}}\}
\end{align*}
with probability at last $1-2\delta$.

\begin{example}[Gini correlation coefficient \citep{yitzhaki2013gini}]
The Gini correlation coefficient for two non-negative random variable $X$ and $Y$ are defined as
$$\Gamma_{X,Y}:=\frac{Cov(X,F_Y(Y))}{Cov(X,F_X(X))}=\frac{\int\int \Big(F_{X,Y}(x,y)-F_X(x)F_Y(Y)\Big)dxdF_Y(y)}{Cov(X,F_X(X))},$$
where $F_X,F_Y$ are marginal CDFs of $X, Y$ and $F_{X,Y}$ is the joint CDF. One can use the multi-dimensional CDF bounds and our previous methods to provide bounds for the Gini correlation coeffiecient.
\end{example}

\section{Experiment details}\label{app:exp_details}

This section contains additional details for the experiments in Section~\ref{sec:experiments}.  We set $\delta=0.05$ (before statistical corrections for multiple tests) in all experiments unless otherwise explicitly stated.  Whenever we are bounding measures on multiple hypotheses, we perform a correction for the size of the hypothesis set.  Additionally, when we bound measures on multiple distributions (e.g. demographic groups), we also perform a correction.  Our code will be released publicly upon the publication of this article.

\subsection{
CivilComments (Section~\ref{sec:civil_comments})
}\label{app:civil_comments}

Our set of hypotheses are a toxicity model combined with a Platt scaler \citep{platt1999probabilistic}, where the model is fixed and we vary the scaling parameter in the range $[0.25, 2]$ while fixing the bias term to $0$.  We use a pre-trained toxicity model from the popular python library Detoxify \footnote{\url{https://github.com/unitaryai/detoxify}} \citep{Detoxify} and perform Platt Scaling using code from the python library released by \citet{kumar2020verified} \footnote{\url{https://github.com/p-lambda/verified_calibration}}.  A Platt calibrator produces output according to:
$$h(v)=\frac{1}{1+\exp(w v + b)}$$
where $w, b$ are learnable parameters and $v$ is the log odds of the prediction.  Thus we form our hypothesis set by varying the parameter $w$ while fixing $b$ to $0$.  Examples are drawn from the train split of CivilComments, which totals 269,038 data points.  

The loss metric for our CivilComments experiments is the Brier Score.  For $n$ data points, Brier score is calculated as:
$$L=\frac{1}{n}\sum_{i=1}^{n}(f_i-o_i)^2$$
where $f_i$ is prediction confidence and $o_i$ is the outcome (0 or 1).

\subsubsection{
Bounding complex objectives (Section~\ref{sec:end_to_end})
}\label{app:end_to_end}

We randomly sample 100,000 test points for calculating the empirical values in Table~\ref{tab:civil_comments_full}, and draw our validation points from the remaining data.  We perform a Bonferroni correction on $\delta=0.05$ for the size of the set of hypotheses as well as the number of distributions on which we bound our measures (in this case the number of groups, 4).  We set $\lambda=1.0$.

Numerical optimization details (including training strategy and hyperparameters) are the same as Section~\ref{sec:optimizing_exp}, explained below in Appendix~\ref{app:optimizing_exp}.  For each group $g$ we optimize the objective 
$$\mathcal{O}=T_1(F_g)+T_2(F_g)$$
where $F_g$ is the CDF bound for group, $T_1$ is expected loss, and $T_2$ is a smoothed version of a median with $a=0.01$ (see Appendix~\ref{app:optimizing_exp} and Figure~\ref{fig:smooth_median_example}).

For comparison, the DKW inequality is applied to get a CDF lower bound, which is then transformed to an upper bound via the reduction approach in Section~\ref{sec:reduction}.  To get the lower bound $b^l_{1:n}$, we
set:
$$b^l_i = \max(0, \frac{\text{\# points}\leq \frac{i}{n}}{n}-\sqrt{\frac{\log(\frac{2}{\delta})}{2n}})$$

\subsubsection{
Numerical optimization examples (Section~\ref{sec:optimizing_exp})
}\label{app:optimizing_exp}

We parameterize the bounds with a fully connected network with 3 hidden layers of dimension 64.  The $n$ gaussian seeds are of size 32, which is also the input dimension for the network.  Training is performed in two stages, where the network is first trained to approximate a Berk-Jones bound, and then optimized for some specified objective $O$.  In both stages of training we aim to push the training error to zero or as close as possible (i.e. ``overfit''), since we are optimizing a bound and do not seek generalization.  The model is first trained for 100,000 epochs to output the Berk-Jones bound using a mean-squared error loss.  Then optimization on $O$ is performed for a maximum of 10,000 epochs, and validation is performed every 25 epochs, where we choose the best model according to the bound on $O$.  Both stages of optimization use the Adam optimizer \citep{kingma_adam_2015} with a learning rate $0.00005$, and for the second stage the constraint weight is set to $\lambda=0.00005$.  We perform post-processing to ensure the constraint holds (see Section~\ref{app:num}).  For some denominator $m$ (in our case $m=10^6$) we set $\gamma=\frac{1}{m}, \frac{2}{m}, \frac{3}{m}, ...$ and check the constraint until it is satisfied.

This approach is applied to both the experiments in Section~\ref{sec:end_to_end} and Section~\ref{sec:optimizing_exp}.  Details on the objective for Section~\ref{sec:end_to_end} are above in Appendix~\ref{app:end_to_end}.  In Section~\ref{sec:optimizing_exp}, we set $\delta=0.01$ and our metrics for optimization are described below:

\paragraph{CVaR.} CVaR is a measure of the expected loss for the items at or above some quantile level $\beta$.  We set $\beta=0.75$, and thus we bound the expected loss for the worst-off 25\% of the population.

\paragraph{VaR-Interval.} In the event that different stakeholders are interested in the VaR for different quantile levels $\beta$, we may want to select a bound based on some interval of the VaR $[\beta_{min}, \beta_{max}]$.  We perform our experiment with $\beta_{min}=0.5,\beta_{max}=0.9$, which includes the median ($\beta=0.5$) through the worst-case loss exluding a small batch of outliers ($\beta=0.9$).

\paragraph{Quantile-Weighted.} We apply a weighting function to the quantile loss $\psi(p)=p$, such that the loss incurred by the worst-off members of a population are weighted more heavily.

\paragraph{Smoothed median.} We study a more robust version of a median:
$$\psi(p;\beta)=\frac{1}{a \sqrt{\pi}} \exp(-\frac{(p-\beta)^2}{a^2})$$
with $\beta=0.5$ and $a=0.01$, similar to a normal distribution extremely concentrated around its mean.  
See Figure~\ref{fig:smooth_median_example} for an illustration of such a weighting.

\begin{figure}[!ht]
\centering
    \includegraphics[width=0.55\textwidth]{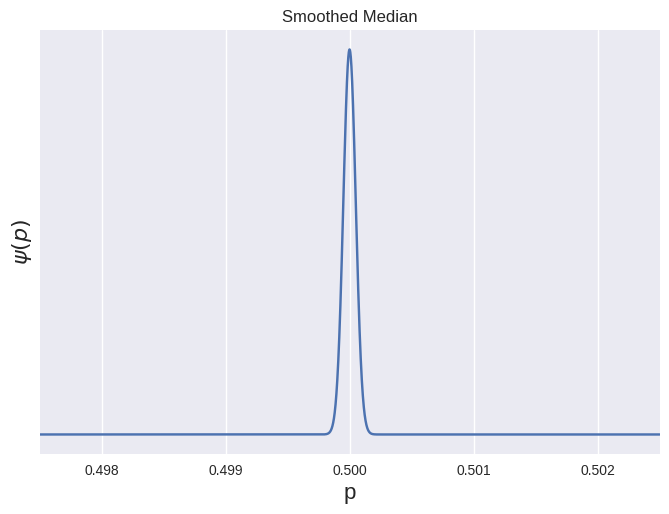}
    \caption{Plot of smoothed median function with $\beta=0.5$ and $a=0.01$}
    \label{fig:smooth_median_example}
\end{figure}

\subsection{
Bounds on standard measures (Section~\ref{sec:standard_exp})
}\label{app:standard_exp}

This section contains additional details for the experiments in Section~\ref{sec:standard_exp}.

\subsubsection{
RxRx1 (Section~\ref{sec:rxrx1})
}\label{app:rxrx1}

We use the code released by \citet{koh2021wilds}\footnote{\url{https://github.com/p-lambda/wilds}} to pre-train a model on the train split of RxRx1 \citep{taylor2019rxrx1} and we evaluate our algorithm on the OOD val split with 9854 total samples.  We randomly sample 2500 items for use in validation (bounding and model selection), and use the remainder of the data points for illustrating the empirical distribution induced by the different hypotheses.  The thresholds which are combined with the pre-trained model to form our hypothesis set are evenly spaced in $[-8, 0]$ under the log transformation with base 10, thus leaving the thresholds in the range $[10^{-8},1]$.

Balanced accuracy is calculated as: 
\begin{gather*}
    L(\hat{Y}, Y) = 1 - \frac{1}{2}(\text{Sens}(\hat{Y}, Y)+\text{Spec}(\hat{Y}, Y))\text{, where} \\
    \text{Sens}(\hat{Y}, Y) = \frac{|\hat{Y} \cap Y|}{|Y|}\ \text{ and }\ 
    \text{Spec}(\hat{Y}, Y) = \frac{k - |Y|-|\hat{Y} \setminus Y|}{k-|Y|}.
\end{gather*}
where $Y$ is the set of ground truth labels (which in this experiment will always be one label), $\hat{Y}$ is a set of predictions, and $k$ is the number of classes.

\subsubsection{
MovieLens-1M (Section~\ref{sec:ml-1m})
}\label{app:ml-1m}

MovieLens-1M \citep{Harper2015-cx} is a publicly available dataset.  We filter all ratings below 5 stars, a typical pre-processing step, and filter any users with less than 15 5-star ratings, leaving us with 4050 users.  For each user, the 5 most recently watched items are added to the test set, while the remaining (earlier) items are added to the train set.   We train a user/item embedding model using the popular python recommender library LightFM \footnote{\url{https://github.com/lyst/lightfm}} with a WARP ranking loss for 30 epochs and an embedding dimension of 16.

For recommendation set $\hat{I}$ we compute a loss combining recall and precision against a user test set $I$ of size $k$:
$$L=\alpha l_r(\hat{I}, I)^2+ (1-\alpha) l_p(\hat{I}, I)^2,\text{ where}$$
$$l_r(\hat{I}, I)=1-\frac{1}{k} \sum_{i\in I} \mathbbm{1}\{i \in \hat{I}\} \text{ and } l_p(\hat{I}, I)=1-\frac{1}{|\hat{I}|} \sum_{i\in \hat{I}} \mathbbm{1}\{i \in I \}$$
where $\alpha=0.5$.  We randomly sample 1500 users for validation, and use the remaining users to plot the empirical distributions.  The 100 hypotheses tested are evenly spaced between the minimum and maximum scores of any user/item pair in the score matrix.

\section{Additional results for numerical optimization (Section \ref{sec:optimizing_exp})}\label{app:add_results}

Figure~\ref{fig:civil_comments_opt} compares the learned bounds $G_{opt}$ to the Berk-Jones ($G_{BJ}$) and Truncated Berk-Jones ($G_{BJ-t}$) bounds, as well as the empirical CDF of the real loss distribution.

\begin{figure}[!ht]
\centering
    \includegraphics[width=0.45\textwidth]{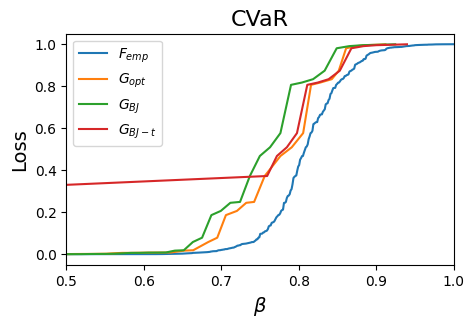}
    \includegraphics[width=0.45\textwidth]{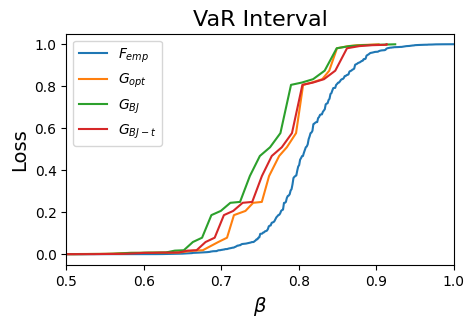}
    \caption{Learning tighter bounds on functionals of interest for protected groups.  On the left, a bound is optimized for CVaR with $\beta=0.75$, and on the right a bound is optimized for the VaR Interval $[0.5, 0.9]$.  In both cases the optimized bounds are tightest on both the target metric as well as the mean, illustrating the power of adaptation both to particular quantile ranges as well as real loss distributions.}
    \label{fig:civil_comments_opt}
\end{figure}

\end{document}